\documentclass{article}
\usepackage[letterpaper]{geometry}
\usepackage{pstricks,pifont}
\usepackage{tikz}
\usepackage{pgfplots,pgfplotstable}
\usepackage{amsmath}
\allowdisplaybreaks

\def\withcolors{1}
\def\withnotes{0}

\def\BibTeX{{\rm B\kern-.05em{\sc i\kern-.025em b}\kern-.08em
    T\kern-.1667em\lower.7ex\hbox{E}\kern-.125emX}}

\usepackage[utf8]{inputenc} 
\usepackage{hyperref} 
\usepackage{url} 
\usepackage{amsfonts} 
\usepackage{soul}
\usepackage{mleftright}
\usepackage{varwidth}
\usepackage[utf8]{inputenc} 
\usepackage[T1]{fontenc} 
\usepackage{hyperref}
\usepackage{url} 
\usepackage[noend]{algcompatible}
\usepackage{algorithm}
\usepackage[noend]{algpseudocode}


\usepackage[noadjust]{cite}
\usepackage{%
    amssymb,%
    amsthm,
    bbm,%
    etoolbox,%
    mathtools,%
    stmaryrd%
}

\usepackage{tikz,pgf,pgfplots,circuitikz}

\usepackage{caption}
\DeclareCaptionLabelFormat{algnonumber}{Algorithm}
\captionsetup[algorithm]{labelformat=algnonumber}

\theoremstyle{plain} \newtheorem{thm}{Theorem}[section]
\newtheorem{lem}[thm]{Lemma} 
\newtheorem{cor}[thm]{Corollary}

\theoremstyle{definition} \newtheorem{defn}[thm]{Definition}

\theoremstyle{remark} \newtheorem{rem}{Remark} 
\newtheorem{fact}{Fact}

\definecolor{lightgray}{gray}{0.9}


\newcommand{\eq}[1]{\begin{align*}#1\end{align*}}

 \newcommand{\R}{\mathbb{R}}

\newcommand{\E}[1]{\mathbb{E}\left[#1\right]}
\newcommand{\EE}[2]{\mathbb{E}_{#1}\left[#2\right]}

\newcommand{\C}{\mathcal{C}}

\newcommand{\oO}{\mathcal{O}}

\newcommand{\uRoman}[1]{\uppercase\expandafter{\romannumeral#1}}

\DeclarePairedDelimiter\floor{\lfloor}{\rfloor}

\usepackage{subcaption}
\captionsetup{compatibility=false}
\DeclareCaptionSubType*{algorithm}

\DeclareCaptionLabelFormat{alglabel}{Alg.~#2}

\ifnum\withcolors=1
   
   \newcommand{\mcolor}[1]{{\color{orange} {#1}}} 
   \newcommand{\vcolor}[1]{{\color{green} {#1}}} 
   \newcommand{\jcolor}[1]{{\color{pink} {#1}}} 
\else

\fi

\ifnum\withnotes=1
  \newcommand{\mnote}[1]{\par\mcolor{\textbf{PM: }\sf #1}} 
  \newcommand{\vnote}[1]{\par\vcolor{\textbf{VT: }\sf #1}} 
  \newcommand{\jnote}[1]{\par\jcolor{\textbf{JS: }\sf #1}}  
  
\else
  \newcommand{\mnote}[1]{}
  \newcommand{\vnote}[1]{}
  \newcommand{\jnote}[1]{}
  
\fi
\newcommand{\ignore}[1]{\leavevmode\unskip} 

\newcommand{\iid}{{\em i.i.d.}~}
\usepackage{soul}

\newcommand{\cE}{\mathcal{E}}

\newcommand{\SNR}{\mathtt{SNR}}

\newcommand{\CAS}{\mathtt{CAS}}
\newcommand{\ZSUCB}{\mathtt{UCB0}}
\newcommand{\UCB}{\mathtt{UCB}}
\newcommand{\UEUCB}{\mathtt{UE}\text{-}\mathtt{UCB}}
\newcommand{\UEUCBP}{\mathtt{UE}\text{-}\mathtt{UCB}\text{++}}

\algdef{SE}[SUBALG]{Indent}{EndIndent}{}{\algorithmicend\ }%
\algtext*{Indent}
\algtext*{EndIndent}
\input{tangocolors.sty} 
\definecolor{red1}{rgb}{0.4,0,0}





\DeclareMathOperator*{\argmin}{arg\,min}






\usepackage{lipsum}
\newcommand\blfootnote[1]{%
  \begingroup
  \renewcommand\thefootnote{}\footnote{#1}%
  \addtocounter{footnote}{-1}%
  \endgroup
}

\begin{document}

\title{Communication-Constrained  Bandits under  Additive Gaussian Noise}
\author{Prathamesh Mayekar\\ National University of Singapore\\\url{pratha22@nus.edu.sg} \and  Jonathan Scarlett\\ National University of Singapore\\ \url{scarlett@comp.nus.edu.sg} \and Vincent Y.F. Tan \\ National University of Singapore\\ \url{vtan@nus.edu.sg} }
\maketitle 








\begin{abstract}
We study a distributed stochastic multi-armed bandit where a client supplies the learner with communication-constrained feedback based on the rewards for the corresponding arm pulls. In our setup, the client must encode the rewards such that the second moment of the encoded rewards is no more than $P$, and this encoded reward is further corrupted by additive Gaussian noise of variance $\sigma^2$; the learner only has access to this corrupted reward. For this setting, we derive an information-theoretic lower bound of $\Omega\left(\sqrt{\frac{KT}{\SNR \wedge1}} \right)$ on the minimax regret of any scheme, where $\SNR\coloneqq \frac{P}{\sigma^2}$, and $K$ and $T$ are the number of arms and time horizon, respectively. Furthermore, we propose a multi-phase bandit algorithm, $\UEUCBP$, which matches this lower bound to a minor additive factor. $\UEUCBP$ performs uniform exploration in its initial phases and then utilizes the {\em upper confidence bound }(UCB) bandit algorithm in its final phase. An interesting feature of $\UEUCBP$ is that the coarser estimates of the mean rewards formed during a uniform exploration phase help to refine the encoding protocol in the next phase, leading to more accurate mean estimates of the rewards in the subsequent phase. This positive reinforcement cycle is critical to reducing the number of uniform exploration rounds and closely matching our lower bound.
\end{abstract}
\date{}
\blfootnote{An abridged version of this paper will appear in the proceedings of the International Conference on Machine Learning 2023.}
\newpage
\tableofcontents
\newpage

\section{Introduction}

The {\em stochastic multi-armed bandit} (SMAB) is a classic sequential decision-making problem with decades of research (see \cite{lattimore2020bandit} for a recent survey). The SMAB problem crucially captures the trade-off between exploring actions to find the optimal one or exploiting the action believed to be the best, but possibly sub optimal action, lying at the heart of many practical sequential decision-making problems. In its simplest form, the SMAB problem is a sequential game between the learner and the environment. The game is played over a finite number of rounds. In each round, the learner plays an arm from a given set of arms, and then the environment reveals a reward corresponding to the played arm. The reward for a particular arm is $iid$ across rounds, with the reward distribution unknown to the learner. The learner's goal is to maximize the expected cumulative reward over the entirety of the game.

In another direction, in modern distributed optimization frameworks such as Federated Learning \cite{konevcny2016federated}, a central {\em learner} trains a global machine learning model by leveraging updates from remote clients (typically, stochastic gradients computed on local data). However, a major bottleneck in such settings is the {\em uplink} communication cost -- the cost of sending remote client's updates to the learner. This has spurred a recent line of work that seeks to minimize the amount of communication by a remote client while still maintaining the accuracy of the global model  (see \cite{kairouz2019advances} for a recent survey).


An emerging abstraction to understand the trade-off between communication and accuracy in distributed optimization is to understand the one-client setting, where one remote client supplies the learner with gradient updates over a rate-limited channel (see, for instance,  \cite{mayekar2020ratq, acharya2021information, lin2020achieving, gandikota2019vqsgd}). A similar abstraction has been studied for distributed SMAB problems in two interesting recent works: \cite{hanna2022solving, mitra2022linear}. Our work is inspired by these two studies.

In particular, \cite{hanna2022solving} studies the  distributed SMAB problem where a client supplies the learner with reward feedback corresponding to its arm pull. To account for the uplink cost from the client to the learner, at the $t$th round, the client must quantize the reward feedback to $r_t$-bits, and only the quantized reward feedback is available to the learner for further decision-making. \cite{mitra2022linear} study a  linear distributed SMAB problem where at each round, the remote client supplies the learner with a $r$-bit quantized version of its estimate of the $d$-dimensional linear model parameter. These studies investigate similar questions: \\
\textbf{Q1a} \cite{hanna2022solving}: {\em  What is the minimum average bits $\frac{1}{T}\sum_{t=1}^{T}r_t$ needed to attain the regret of the centralized setting in the distributed bandit game?          }\\
\textbf{Q1b} \cite{mitra2022linear}: {\em  What is the minimum $r$ needed to attain the regret of the centralized setting in the distributed bandit game?          }\\
\cite{hanna2022solving} shows that $\oO(1)$ bits of average communication is sufficient to attain the regret of centralized setting, and \cite{mitra2022linear} shows that $r= \oO(d)$ is sufficient\footnote{The discrepancy between results is because of different setting: In \cite{hanna2022solving}, the scaler reward feedback needs to be compressed, whereas in \cite{mitra2022linear} the $d$-dimensional linear model parameter needs to be compressed.} to attain the regret of the centralized setting. 

In many practical distributed learning scenarios, the updates from the client to the learner are sent over a noisy wireless network. To model this, recent works on  distributed optimization  have considered communication over noisy channels; see, for instance, \cite{jha2022fundamental, amiri2020machine} and the references therein. Unfortunately, the results for distributed optimization over noisy channels do not apply to the distributed SMAB problem over noisy channels. Moreover, the model for communication constraints in the recent works on  distributed SMAB
does not accurately capture communication over noisy channels.
To bridge this gap, we study a distributed SMAB problem where the remote client codes the reward feedback to ensure that the encoded reward has a second moment less than $P$, and then the client transmits the encoded reward over an (AWGN) channel with noise variance $\sigma^2$ (this is equivalent to corrupting the encoded reward with an additive Gaussian noise of mean zero and variance $\sigma^2$); only the channel's output is available to the learner for pulling the arm at the next round. 
 We define the signal-to-noise ratio in our problem as \[\SNR \coloneqq \frac{P}{\sigma^2}.\] 
 
{We remark that the restriction of the second moment of the encoded messages to $P$ and transmission over an AWGN channel is a ubiquitous communication model in information theory and communication and dates back to Shannon's seminal work; see, for instance, \cite{Shannon}, \cite[Chapter 9]{CovTho06}, and \cite[Chapter  8]{tse2005fundamentals}. The second moment constraint   is used to model that most edge devices are restricted to transmitting up to a particular transmission power due to physical constraints. The AWGN channel is often used as a first approximation for various communication settings.
}

 Coming back to comparisons with \cite{hanna2022solving} and  \cite{mitra2021linear}, the counterpart of \textbf{Q1} in our setting is the following question:\\
\textbf{Q2:} {\em  What is the minimum $\SNR$ needed to attain the regret of the centralized setting in the distributed bandit game over an AWGN channel?}\\
In this work, we resolve the following more general question upto multiplicative $\log$  and minor additive factors:\\
\textbf{Q3: }{\em  How does the minimax regret scale as a function of $\SNR$?}


\subsection{Our Contributions}

We show that any bandit algorithm  employed by the learner and any encoding strategy employed by the client must incur a worst-case expected cumulative regret of at least\footnote{ $a\wedge b \coloneqq \min\{a, b\}$; $a \vee b \coloneqq \max\{a, b\}$.}
\[ \Omega \left(\sqrt{\frac{KT}{\SNR \wedge1}} +KB \right),\]
where $K$ is the number of bandit arms, $T$ is time horizon, and $B$ is upper bound on the $L_2$-norm\footnote{The $L_2$ norm of a random variable $X$ is $\sqrt{\E{X^2}}.$} of rewards. A technical difficulty in deriving this lower bound is to account for the loss of information due to transmission over the  AWGN channel. We overcome this difficulty by leveraging classic information-theoretic results for transmission over an AWGN channel and combining these techniques with the standard minimax lower bound proof for SMAB. An interesting consequence of our proof technique is that, up to constant factors, we can recover the lower bound in \cite{hanna2022solving} with an arguably simpler proof; see Appendix for details.
%

Unlike \cite{hanna2022solving} and \cite{mitra2022linear}, we don't quantize our updates. In our setup, quantization and transmitting over an AWGN channel would lead to decoding errors at every round, which can pile up and result in linear regret unless $\SNR$ grows with the time horizon $T$; such a scheme would not allow us to resolve $\textbf{Q3}.$ Instead, we use a {\em center and scale} encoding protocol $\CAS$, where the centering and scaling factors are learned over time. We also design a new bandit algorithm $\UEUCBP$  which, along with the encoding protocol $\CAS$, bounds the worst-case expected cumulative regret by a factor of\footnote{$\log(\cdot)$ refers to the logarithm to the base 2; $\ln$ refers to the logarithm to the base $e$.}
\[\oO\left(  \sqrt{\left(\frac{1}{\SNR}+1\right) \cdot KT \ln T}  +\frac{KB \log B}{\SNR \wedge 1} \right). \] Thus, we resolve \textbf{Q3} up to a multiplicative factor of $\sqrt{\ln T}$ and an additive factor of $\frac{KB \log B}{\SNR \wedge 1}$.

$\UEUCBP$ performs uniform exploration for multiple initial sub-phases and then employs the classic upper confidence bound $\UCB$ bandit algorithm. The multiple initial uniform exploration sub-phases allow us to learn $\CAS$'s optimal centering and scaling factors. An interesting feature of $\UEUCBP$ is that we start with coarser mean reward estimates for all arms in a given sub-phase, which helps us refine the communication scheme for the subsequent sub-phases. The refined communication scheme, in turn, leads to better mean rewards estimates in the subsequent sub-phase. Due to this positive reinforcement cycle, the variance of the mean reward estimates decays exponentially with respect to the number of uniform exploration sub-phases. As a result, we can learn an almost optimal encoding protocol after relatively few rounds of uniform exploration. 

We also design a private\footnote{As opposed to the popular privacy notion of differential privacy \cite{dwork2014algorithmic}, we emphasize that here we refer to a much more rudimentary privacy notion, where the learner does not send information to the remote clients at any of the rounds.} scheme where the learner does not send any information to the client. We show that for such a scheme, the worst-case expected cumulative regret can be bounded by
\[ \oO\left( \sqrt{ \frac{B^2}{\SNR}+1} \cdot \sqrt{KT \ln T}  + KB \right).\]
This simple scheme nearly matches our lower bound when $B=\oO(1).$



\section{Setup and Preliminaries}

\begin{figure*}
\centering
\begin{subfigure}[t]{\textwidth}
\centering
\begin{tikzpicture}[scale=0.7, transform shape,
    pre/.style={=stealth',semithick},
    post/.style={->,shorten >=0.25pt,>=stealth',semithick},
dimarrow/.style={->, >=latex, line width=1pt},
normline/.style={-, line width=1pt}
    ]
\clip (-2.1,6) rectangle  (20.6,-0.6) ;

\draw[black!30!,thick, fill = tacream ] (11.9, 4.6) rectangle( 18.4, -0.4);
\node[align=center] at (15.3, 0.25) {\color{black}{\LARGE{\color{red1} \textbf{Learner}}}}; 

\draw[black!30!,thick, fill = tacream ] (-1.3, 4.6) rectangle(4.6, -0.40);
\node[align=center] at (1.95,0.25) {\LARGE{\color{red1}\textbf{Client}}};

\draw[dimarrow, orange] (15,3.5) to (1,3.5);
\draw[dimarrow, orange] (14.25,3.5) to (14.25, 2);
\node[align=center] at (13.3 ,4) {\Large{$A_t, S_t, D_t$}};
\node[align=center] at (13.2, 2.85) {\Large{$D_t$}};

\draw[dimarrow, orange] (3.25,3.5) to (3.25, 2);
\node[align=center] at (2.4 , 2.85) {\Large{$S_t$}};
\node[align=center] at (2 , 4) {\Large{$A_t$}};
\draw[dimarrow, orange] (15,1.5) to (16.05, 1.5);
\node[align=center] at (16.2, 0.9) 
{\Large{$D_t(Y_t) $}};

\draw[dimarrow, orange] (1.,1.5) to (2.6,1.5);
\node[align=center] at (1.8, 0.9) 
{\Large{$X_{t, A_t}$}};
\draw[dimarrow, orange] (4.05,1.5) to (8,1.5);
\draw[dimarrow, orange] (10.5,1.5) to (13.5, 1.5);
\node[align=center] at (6.23,0.9) 
{\Large{$C_t(X_{t, A_t}, S_t)$}};

\draw[tagreen, thick, fill = tagreen] (8, 2.5) rectangle(10.5, 0.5);
\node[align=center] at (11.3, 0.9) 
{\Large{$Y_t $}};
\node[align=center] at (9.25, 1.5) 
{\Large{AWGN}};
\node[align=center] at (8.75, 1.5) 
{};

\draw[gray!20, thick, fill = gray!20 ] (15, 4) rectangle(16.5, 3);
\node[align=center] at (15.75, 3.5) {\Large{$\pi$}};
\draw[dimarrow, orange] (18, 3.5) to (16.5, 3.5);
\node[align=center] at (17.5, 3) {\Large{$H_{t-1}$}};

\draw[gray!20 , thick, fill = gray!20 ] (13.5, 2) rectangle(15, 1);
\node[align=center] at (14.25, 1.5) 
{\Large{$D_t$}};

\draw[gray!20 , thick, fill = gray!20 ] (2.5, 2) rectangle(4, 1);
\node[align=center] at (3.25, 1.5) 
{\Large{$C_t$}};
\end{tikzpicture}

\end{subfigure}
\caption{The Distributed Bandit Game over an AWGN channel}\label{f:setup}
\end{figure*}

We now proceed to describe our setup; see Figure \ref{f:setup} for a pictorial description.

\paragraph{The Distributed SMAB Problem  over an AWGN channel:}

\begin{enumerate}
\item At round $t \in [T]$,\footnote{For an integer $n$, $[n]\coloneqq\{1, \ldots, n\}$.} the learner asks a client  to pull an arm $A_t$. The learner also sends some side information $S_t$ based on the history of the game until time $t-1$ to aid the communication at the client.

    \item Upon receiving this information, the client pulls arm  $A_t$ and gets a reward $X_{t, A_t}$ corresponding to the arm $A_t$. The client then encodes this reward using an encoding function 
    $ C_t \colon \R \times \mathcal{S}  \to \R$ to form $C_t(X_{t, A_t}, S_t),$  where $\mathcal{S}$ is the domain of the information $S_t$ sent by the learner.
    The encoding  function must satisfy the power constraint 
    \begin{align}\label{eq:Power_Constraint}
\E{C_t(X_{t, A_t}, S_t)^2} \leq P
\end{align} 
for some $P >0.$

\item The client sends the encoded reward over the AWGN channel. The output of the channel $Y_t$ is received by the learner, where 
\begin{align}\label{eq:Channel_Output}
Y_t = C_t(X_{t, A_t}, S_t) + Z_t
\end{align}
and $\{Z_t\}_{t=1}^\infty$ is sequence of \iid Gaussian random variables with mean $0$ and variance $\sigma^2.$
\item The learner decodes $Y_t$ using a decoding function $D_t \colon \R \times  \mathcal{S}  \to \R$ to form a reward estimate $D_t(Y_t, S_t).$ The learner then uses a bandit algorithm $\pi$ to decide on arm $A_{t+1}$, the side information $S_{t+1}$, and decoding function $D_{t+1}$ using history of the game up to round $t$, $H_t \coloneqq (A_1, Y_1, \ldots, A_{t}, Y_{t}).$ 
\end{enumerate}

\begin{rem}
In most distributed learning settings, the communication from the learner to client (downlink communication) is not expensive. Therefore, we allow for schemes where the learner can, in principle, send entire history of the game up to a particular round as side information for the next round. However, our proposed scheme will send only a single real number as side information at each round, which is more desirable in practical settings.
\end{rem}

\begin{rem}
The client is `memoryless' in our setup; the client cannot use rewards from previous arm pulls to encode the current reward. This is motivated by the fact that, as observed in \cite{hanna2022solving}, in a typical distributed sequential decision-making problem, the client observing the reward is a low memory sensor. Additionally, in some applications, the client at any given round may differ from those at the previous rounds. 
\end{rem}

Denote by $\Pi_T$ be the set of all possible $T$-round  bandit algorithms as described above and  denote by $\mathcal{C}$ the set of all possible encoding strategies satisfying \eqref{eq:Power_Constraint}. We note that $\Pi_T$ contains randomized algorithms, so our lower bounds hold for such algorithms, too. However, we will use a deterministic algorithm for our upper bounds.

\paragraph{Reward Distributions:}
We make the following standard assumptions on the reward distributions. We assume that the reward sequence $\{X_{t, i}\}_{t >1}$ is \iid for all arms $i$ in $[K]$ and the rewards for different arms are independent of each other.  Denote by $\mu_i$ the mean reward of arm $i$. We assume that $X_{1, i}-\mu_i$ is subgaussian with variance factor $1$,\footnote{{The variance factor $1$ is without loss of generality; all our regret bounds scale linearly with this factor.}} where a subgaussian random variable  with a variance factor $\sigma$ is as defined below.
\begin{defn}\cite{boucheron2013concentration}
\textit{We say a random variable $X$ is subgaussian  with a variance factor $\alpha^2,$ if  for all $\lambda \in \R,$ we have \[ \ln [\exp( \lambda X)] \leq \frac{\lambda^2 \alpha^2}{2}.\] 
Furthermore, for a subgaussian random variable with variance factor $\alpha^2,$ we have $\E{X}=0$ and $\E{X^2} \leq \alpha^2.$}
\end{defn}
{Finally, for  $B\geq 1$,\footnote{Since $X_{1, i}-\mu_i$  is subgaussian with variance factor $1,$  its  second moment is bounded by $1$, so we may assume that $B \geq 1$.}  we assume for all $i \in [K]$, that
\begin{align}\label{eq:Bounded_Mean}
    \E{X_{1, i}^2} \leq B^2.
\end{align}
}
\mnote{The subscript has $1$ instead of $t$ because we have already mentioned the process is $iid$.}



Denote by $\mathcal{P}$ the set of  all subgaussian distributions with variance factor $1$ whose second moment satisfies \eqref{eq:Bounded_Mean}. Then, $\mathcal{E} \coloneqq \mathcal{P}^K$ describes the set of all possible bandit instances considered in our setup.

\paragraph{Regret:}
We define the regret for a bandit instance $\nu \in \mathcal{E}$, when a bandit algorithm $\pi \in \Pi$ and  encoding strategies $C_1^T := (C_{t})_{t \in [T]} \in \mathcal{C}^T$ are employed, as
 \[R_T(\nu, \pi, C_1^T) \coloneqq T\mu_{i^\ast} -\E{\sum_{t=1}^{T}X_{t,A_t}},\]
 where $i^\ast$ is the arm with the largest mean for the bandit instance $\nu$. We are interested in characterizing the minimax regret
\begin{align}\label{e:minmax_regret}
 R^\ast_T(\SNR) \coloneqq \inf_{\pi \in \Pi_T} \inf_{ C_1^T \in \mathcal{C}^T} \sup_{\nu \in \mathcal{E}}R_T(\nu, \pi, C_1^T)
 \end{align}
as function of $\SNR, K, B,$ and $T$.

\paragraph{$\UCB$ bandit algorithm:} We now describe the classical {\em Upper Confidence Bound} ($\UCB$) bandit algorithm \cite{auer2002finite}. We will build on  the $\UCB$ algorithm to come up with our algorithm. Recall that at each round $\UCB$ plays the arm with the largest upper confidence bound. For subgaussian rewards with variance factor $\alpha^2$, the upper confidence bound for an arm $k \in [K]$ after $t$ rounds is defined as
\[ 
\UCB_{k}(t; \eta) = 
\begin{cases}
\infty \quad \text{if} \quad N_k(t)=0\\
\hat{\mu}_k(t)+ \sqrt{\frac{4 \eta \ln T}{N_k(t)}} \quad \text{otherwise},
\end{cases}
\]
where $N_k(t)$ is the number of times the arm $k$ has been played up to round $t$, and $\hat{\mu}_k(t)$ is the empirical mean estimate of arm $k$. It is not difficult to prove the following regret guarantees for the $\UCB$ algorithm  (see, for instance, \cite[Theorem 7.2]{lattimore2020bandit}). 

\begin{lem}\label{l:UCB}
Suppose we have a classical bandit game over $T$ rounds and $K$ arms with the following reward feedback: (i), the reward feedback corresponding to all arms is subgaussian with a variance factor of $\alpha^2$; (ii),
 the second moment of the reward feedback is at most $B$.
Then, the regret of the $\UCB$ algorithm  with parameter $\eta=\alpha^2$ satisfies
\[R_T \leq 8\sqrt{\alpha^2 KT \ln T} +6KB.\]

\end{lem}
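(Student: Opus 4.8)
The plan is to prove Lemma~\ref{l:UCB} by the standard regret decomposition for UCB, specialized to the stated subgaussian-with-variance-factor-$\alpha^2$ setting, and then handle the one nonstandard feature — that the per-arm cost of the forced-exploration term is controlled by the second-moment bound $B$ rather than by the usual $O(1)$ reward range. The starting point is the gap-based regret identity $R_T = \sum_{k : \Delta_k > 0} \Delta_k\, \E{N_k(T)}$, where $\Delta_k = \mu_{i^\ast} - \mu_k$.

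\textbf{Step 1: Bound $\E{N_k(T)}$ for each suboptimal arm.} I would run the textbook argument (e.g.\ \cite[Theorem 7.2]{lattimore2020bandit}): on the good event that all empirical means concentrate within their confidence widths simultaneously for all arms and all times (which holds with high probability by a union bound over $K$ arms and $T$ rounds, using the subgaussian tail with variance factor $\alpha^2$ and the choice $\eta = \alpha^2$), a suboptimal arm $k$ can be pulled only while $\sqrt{4\eta \ln T / N_k(t)} \gtrsim \Delta_k$, i.e.\ only about $16\alpha^2 \ln T / \Delta_k^2$ times. Off the good event, which has probability $\le 1/T$ or so after summing the tail bounds, the contribution to the regret is at most $T$ times the worst per-round regret; here is where the second moment enters — a single round's reward gap is bounded (in expectation / via Cauchy–Schwarz) by something like $2B$, so this failure term contributes $O(KB)$ overall.

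\textbf{Step 2: Combine and optimize over the gaps.} Plugging the per-arm count back in gives $R_T \lesssim \sum_{k:\Delta_k>0}\big(\Delta_k + \alpha^2 \ln T / \Delta_k\big) + O(KB)$. Splitting arms by whether $\Delta_k$ is above or below a threshold $\lambda$ and optimizing $\lambda$ — the usual trick: arms with $\Delta_k \le \lambda$ contribute at most $T\lambda$ total, arms with $\Delta_k > \lambda$ contribute at most $K \alpha^2 \ln T / \lambda$ — yields the worst-case bound $O(\sqrt{\alpha^2 K T \ln T})$, and tracking constants carefully gives the stated $8\sqrt{\alpha^2 K T \ln T}$. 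Adding the $6KB$ slack from the failure event and from the $\sum_k \Delta_k$ leftover term completes the bound.

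\textbf{Main obstacle.} The routine part is the UCB concentration bookkeeping; the only genuinely nonstandard point is making sure every place where the classical proof silently uses ``rewards lie in $[0,1]$'' is replaced correctly — namely (a) bounding the instantaneous regret on the failure event by $O(B)$ via $\Delta_k = \mu_{i^\ast} - \mu_k \le |\mu_{i^\ast}| + |\mu_k| \le 2B$ (since $\E{X_{1,i}^2}\le B^2 \Rightarrow |\mu_i| \le B$), and (b) confirming the confidence-width argument only needs subgaussianity with factor $\alpha^2$, not boundedness. Since the lemma statement is explicitly ``not difficult'' and cites \cite[Theorem 7.2]{lattimore2020bandit}, I would keep the write-up short: cite the standard analysis for the structure and spell out only the $B$-dependent failure-event estimate and the final constant-tracking.
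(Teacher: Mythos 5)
Your proposal is correct and follows essentially the same route as the paper, which simply invokes the standard UCB analysis of \cite[Theorem 7.2]{lattimore2020bandit} (gap decomposition, concentration on a good event with confidence width scaled by $\alpha$, threshold optimization over the gaps) and adapts it by noting that subgaussianity alone drives the confidence intervals while the second-moment bound controls $|\mu_i|\leq B$, hence $\Delta_k \leq 2B$, turning the usual $3\sum_k \Delta_k$ leftover into the $6KB$ term. You have correctly isolated the only nonstandard point, so nothing further is needed.
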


\section{Information-Theoretic Lower Bound}

Our lower bound is stated as follows.
\begin{thm}\label{t:lb}
For universal   constants $c_1$ and $c_2$,  where $0 < c_1 <1$ and $c_2 > 0,$ we have
\eq{R^\ast_T(\SNR) \geq c_1 \bigg( \sqrt{KT} \cdot \frac{1}{\sqrt{\frac{1}{2} \log (1+ \SNR) \wedge 1}} \wedge c_2 T     + KB \bigg)  }
for all $T \geq  \frac{c_2 K}{\frac{1}{2} \log (1+ \SNR) \wedge 1}$.
\end{thm}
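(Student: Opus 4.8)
The plan is to combine the standard information-theoretic minimax lower bound for stochastic bandits with a data-processing bound for the AWGN channel, so that the usual KL-divergence term gets replaced by a mutual-information term that is capped by the channel capacity $\frac{1}{2}\log(1+\SNR)$. First I would set up the classic two-point (or rather $K$-point) construction: fix a gap $\Delta>0$ to be chosen later, and for each $j\in[K]$ let $\nu^{(j)}$ be the instance where all arms have mean $0$ except arm $j$, which has mean $\Delta$; I need the reward distributions to lie in $\mathcal P$, so I would take Gaussian (or Bernoulli-like subgaussian) rewards with variance factor $1$ and second moment bounded by $B^2$, which forces the constraint $\Delta \le$ something like $B$ — this is where the additive $KB$ term will come from, together with the trivial bound that one cannot do better than pulling each arm once. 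Then, writing $\mathbb{P}_j$ for the law of the transcript $H_T=(A_1,Y_1,\dots,A_T,Y_T)$ under instance $\nu^{(j)}$ (and $\mathbb{P}_0$ for the all-zeros instance), the standard argument lower-bounds the regret by $\Delta\,(T/2)\,\big(1 - \text{(something)}\big)$ where the ``something'' is controlled by $\sum_j \mathrm{KL}(\mathbb{P}_0\|\mathbb{P}_j)$ or by a mutual information $I(J\land H_T)$ via Fano / Bretagnolle–Huber.

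The key new step is bounding that divergence term. Under the memoryless-client, AWGN model, the channel output at round $t$ is $Y_t = C_t(X_{t,A_t},S_t) + Z_t$ with $Z_t\sim\mathcal N(0,\sigma^2)$ independent, and the power constraint \eqref{eq:Power_Constraint} says $\E{C_t(X_{t,A_t},S_t)^2}\le P$. The crucial observation is that conditioned on the past history, the only information about which instance is active that enters $Y_t$ does so through the input $C_t(X_{t,A_t},S_t)$, and by the chain rule plus the standard AWGN capacity bound, the per-round contribution to $I(J\land H_T)$ (or to the chi-square / KL between instances) is at most $\frac12\log(1+P/\sigma^2)=\frac12\log(1+\SNR)$ — because the mutual information between any input of power $\le P$ and its Gaussian-corrupted output is at most the capacity. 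Summing over $t$ gives $I(J\land H_T)\le T\cdot\frac12\log(1+\SNR)$. I would be slightly careful here: I actually want the bound to also be at most (a constant times) the number of times the informative arm is pulled, so that the ``locality'' of the bandit argument is preserved; the cleanest route is to write the divergence decomposition $\mathrm{KL}(\mathbb{P}_0\|\mathbb{P}_j)\le \E_0[N_j(T)]\cdot\big(\tfrac12\log(1+\SNR)\wedge c\Delta^2\big)$, i.e. each pull of arm $j$ contributes at most the minimum of the channel capacity and the ``native'' bandit divergence $\Theta(\Delta^2)$, since a pull of an uninformative arm contributes nothing and the channel can never amplify information beyond either bound.

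Given that bound, the rest is the usual optimization. Averaging over $j$, some arm has $\E_0[N_j(T)]\le T/K$, so for that arm $\mathrm{KL}(\mathbb{P}_0\|\mathbb{P}_j)\le \frac{T}{K}\big(\tfrac12\log(1+\SNR)\wedge c\Delta^2\big)$; by Bretagnolle–Huber, the regret on one of $\nu^{(0)},\nu^{(j)}$ is $\gtrsim \Delta T\exp(-\mathrm{KL})$. Choosing $\Delta^2 \asymp \frac{K}{T}\big(\tfrac12\log(1+\SNR)\wedge 1\big)$ (and $\Delta \asymp c_2$ in the regime where $\tfrac12\log(1+\SNR)\ge 1$, i.e. the $\wedge 1$ is active, which also explains the $\wedge\, c_2 T$ truncation in the statement) makes the exponent $O(1)$ and yields regret $\gtrsim \Delta T \asymp \sqrt{KT}\big/\sqrt{\tfrac12\log(1+\SNR)\wedge 1}$, valid precisely when $T\gtrsim K/(\tfrac12\log(1+\SNR)\wedge 1)$ so that the required $\Delta$ is small enough to be feasible (subgaussian with the right second moment). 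Finally I would add the $KB$ term: this comes from a separate, easy argument — with $K$ arms any algorithm must in the worst case pull each suboptimal arm at least once, and by rescaling the hard instance to use the full budget $B$ one gets an $\Omega(KB)$ contribution; combining the two lower bounds (they are for the same family up to constants) gives the stated $c_1\big(\sqrt{KT}/\sqrt{\tfrac12\log(1+\SNR)\wedge1}\wedge c_2 T + KB\big)$. The main obstacle I anticipate is making the per-round channel-capacity bound on the divergence rigorous in the presence of adaptivity (the arm $A_t$, side information $S_t$, and encoder $C_t$ all depend on the past $H_{t-1}$, which itself is correlated with $J$): the right tool is a careful conditional mutual-information / divergence chain-rule argument showing that conditioning on $H_{t-1}$ only helps, combined with the fact that for a fixed conditional input distribution of power $\le P$ the AWGN channel leaks at most $\frac12\log(1+\SNR)$ nats/bits — this is exactly the ``classic information-theoretic results for transmission over an AWGN channel'' the introduction alludes to.
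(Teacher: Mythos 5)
Your overall architecture matches the paper's: a two-instance construction with gap $\Delta$, a divergence decomposition that handles adaptivity by conditioning on the history, and a data-processing step through the AWGN channel capped by the capacity $\frac12\log(1+\SNR)$. The additive $KB$ term and the feasibility condition $T\gtrsim K/(\frac12\log(1+\SNR)\wedge 1)$ are also handled in essentially the paper's way. However, there is a genuine gap in your key quantitative step. You claim that each pull of the informative arm contributes at most
\[
\tfrac12\log(1+\SNR)\;\wedge\;c\Delta^2
\]
to the KL divergence between the two transcript laws, i.e.\ the \emph{minimum} of the channel capacity and the native bandit divergence. That bound is true but far too weak: plugging it into the decomposition gives total divergence at most $\frac{T}{K}\min\bigl(\frac12\log(1+\SNR),\,c\Delta^2\bigr)$, and to make this $O(1)$ you are forced either into $T\lesssim K/(\frac12\log(1+\SNR))$ (excluded by the theorem's hypothesis) or into $\Delta^2\lesssim K/T$, which only recovers the unconstrained $\Omega(\sqrt{KT})$ bound with no $1/\sqrt{\SNR\wedge 1}$ blow-up. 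What is actually needed, and what the paper proves, is a \emph{product} bound: the per-pull divergence is at most $c\,\Delta^2\cdot\bigl(\frac12\log(1+\SNR)\wedge 1\bigr)$. When both $\Delta^2$ and the capacity are small, the product is much smaller than the minimum, and this extra attenuation is precisely the source of the $\sqrt{KT/(\SNR\wedge 1)}$ term. The paper obtains it by upper-bounding the per-pull KL by the $\chi^2$ divergence, showing $d_{\chi^2}(p_j\circ C, q_j\circ C)\le 4\Delta^2\, I(V;Y)$ where $V$ is a symmetric $\pm 1$ reward and $Y$ the channel output, and then bounding $I(V;Y)\le \ln 2\cdot\bigl(\frac12\log(1+\SNR)\wedge 1\bigr)$ via $I(V;Y)\le H(V)$ and the Markov chain $V\to C(V)\to Y$ together with the AWGN capacity. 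Your proposal contains no mechanism that multiplies $\Delta^2$ by the capacity, so as written it does not establish the theorem.

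A smaller but related inconsistency: you propose $\Delta^2\asymp\frac{K}{T}\bigl(\frac12\log(1+\SNR)\wedge 1\bigr)$, which gives $\Delta T\asymp\sqrt{KT}\cdot\sqrt{\frac12\log(1+\SNR)\wedge 1}$ — the reciprocal of the claimed conclusion. The correct choice (as in the paper) is $\Delta^2\asymp\frac{K}{T\,(\frac12\log(1+\SNR)\wedge 1)}$, and this choice is only consistent with an $O(1)$ total divergence under the product-form per-pull bound above, which is further evidence that the multiplicative combination of $\Delta^2$ with the capacity is the missing ingredient rather than a cosmetic detail.
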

The proof of this lower bound adopts a similar high-level approach to the lower bound proofs in the standard setting (see, for instance, \cite[Chapter 15]{lattimore2020bandit}). However, a crucial difference from the standard proofs is that we now have to account for the loss in information due to the rewards being sent over the Gaussian channel. We defer the proof to the Appendix.

\begin{rem}
{In information theory, the term $\frac{1}{2} \log (1+ \SNR)$  is  the capacity of the Gaussian channel. It refers to the maximum rate at which information can be transmitted with arbitrarily small probability of error; see 
\cite{CovTho06} for details.}
\end{rem}

The following corollary of Theorem \ref{t:lb} will be useful to derive matching upper bounds.

\begin{cor}
For universal   constants $c_1$ and $c_2$,  where $0 < c_1 <1$ and $c_2 > 0,$ we have
\[R^\ast_T(\SNR) \geq c_1 \left(\sqrt{KT} \cdot \frac{1}{ \sqrt{\SNR  \wedge 1}} + KB \right)\]
for all $T \geq  \frac{c_2 K}{\frac{1}{2} \log (1+ \SNR) \wedge 1}$.
\end{cor}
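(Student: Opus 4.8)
The plan is to derive the corollary directly from Theorem~\ref{t:lb} by comparing the two quantities $\frac{1}{2}\log(1+\SNR)$ and $\SNR$, and showing that replacing the former by the latter inside the lower bound only weakens it by a universal multiplicative constant. Concretely, the statement to prove differs from Theorem~\ref{t:lb} in that the capacity term $\frac{1}{2}\log(1+\SNR)\wedge 1$ is replaced by $\SNR\wedge 1$, and the $\wedge\, c_2 T$ truncation inside the bracket is dropped. Both changes are handled by elementary inequalities.

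First I would handle the denominator. The key analytic fact is that for all $x\ge 0$ one has $\frac{1}{2}\log(1+x)\le x$ (since $\log$ here is base~$2$, $\frac{1}{2}\log_2(1+x) = \frac{1}{2\ln 2}\ln(1+x)\le \frac{1}{2\ln 2}x \le x$), hence also $\frac12\log(1+\SNR)\wedge 1 \le \SNR\wedge 1$. Therefore
\[
\frac{1}{\sqrt{\tfrac{1}{2}\log(1+\SNR)\wedge 1}} \;\ge\; \frac{1}{\sqrt{\SNR\wedge 1}},
\]
so the first term in Theorem~\ref{t:lb}, \emph{before} the truncation by $c_2 T$, already dominates $\sqrt{KT}/\sqrt{\SNR\wedge 1}$.

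Next I would dispose of the $\wedge\, c_2 T$ truncation. Under the hypothesis $T \ge \frac{c_2 K}{\frac12\log(1+\SNR)\wedge 1}$, we get $\sqrt{KT}\cdot\frac{1}{\sqrt{\frac12\log(1+\SNR)\wedge 1}} \le \sqrt{KT}\cdot\sqrt{\frac{T}{c_2 K}} = \frac{T}{\sqrt{c_2}}$. So as long as $c_2 \le 1$ (which we may assume, shrinking the universal constant $c_2$ if necessary — note Theorem~\ref{t:lb} only asserts existence of \emph{some} such constants, and shrinking $c_2$ only strengthens the hypothesis range but that is fine since we carry the same $c_2$ through), we have $\sqrt{KT}/\sqrt{\frac12\log(1+\SNR)\wedge 1} \le c_2 T / \sqrt{c_2}$; choosing $c_2\le 1$ makes this at most... actually more simply: with $c_2 \le 1$, $\frac{T}{\sqrt{c_2}} \ge T \ge c_2 T$, which is the wrong direction, so instead I note that the claimed corollary bound $\sqrt{KT}/\sqrt{\SNR\wedge1}$ is itself $\le \sqrt{KT}/\sqrt{\frac12\log(1+\SNR)\wedge1} \le T/\sqrt{c_2}$, and I want to compare it against $\sqrt{KT}/\sqrt{\frac12\log(1+\SNR)\wedge1}\;\wedge\; c_2 T$. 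If the min is achieved by the first argument there is nothing to do; if it is achieved by $c_2 T$, then $c_2 T \le \sqrt{KT}/\sqrt{\frac12\log(1+\SNR)\wedge1}$, and I would instead bound the corollary's target term by a constant multiple of $c_2 T$ — but this requires the reverse comparison $\sqrt{KT}/\sqrt{\SNR\wedge1} \le C c_2 T$, which again follows from the hypothesis on $T$. So the clean way is: chain $\sqrt{KT}/\sqrt{\SNR\wedge1}\le \sqrt{KT}/\sqrt{\frac12\log(1+\SNR)\wedge1}$ and separately $\sqrt{KT}/\sqrt{\SNR\wedge1}\le \sqrt{KT}\sqrt{T/(c_2 K)}/\sqrt{??}$ — I'd just present it as: the corollary's first term is $\le$ the minimum in Theorem~\ref{t:lb} up to the constant $\sqrt{c_2}\vee 1$, because it is simultaneously $\le$ both arguments of that minimum (the first by the log inequality, the second by the hypothesis $T\ge c_2K/(\cdots)$, possibly absorbing a $\sqrt{c_2}$ factor). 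Then add back the $+KB$ term, which is common to both bounds, and adjust $c_1$ by the resulting universal constant factor to conclude. The only mild subtlety — and the thing to state carefully rather than a genuine obstacle — is the bookkeeping that the constants $c_1,c_2$ appearing in the corollary are allowed to be a rescaling of those in Theorem~\ref{t:lb}, so I would phrase the corollary as holding for (possibly different) universal constants, exactly as written.
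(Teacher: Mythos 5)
Your proposal is correct and follows essentially the same route as the paper, whose proof of this corollary is a one-liner invoking $\ln(1+x)\le x$ to replace $\frac{1}{2}\log(1+\SNR)\wedge 1$ by $\SNR\wedge 1$ in Theorem~\ref{t:lb}. Your additional step showing that the hypothesis $T\ge \frac{c_2 K}{\frac{1}{2}\log(1+\SNR)\wedge 1}$ lets one discharge the $\wedge\, c_2 T$ truncation (at the cost of a universal constant absorbed into $c_1$) is a detail the paper leaves implicit, and despite some meandering in your write-up you land on the correct argument.
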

\begin{proof}
The proof simply follows by combining Theorem \ref{t:lb} with the fact that $\ln(1+x) \leq x$ for $ x \geq 0$.
\end{proof}

We note while $\SNR \wedge 1$ and $\frac{1}{2} \log(1+\SNR) \wedge 1$ may appear quite different at first glance, they are, in fact, the same up to constant factors, since  $\frac{1}{2} \log(1+\SNR)$
 is both upper  and lower bounded by  a constant times $\SNR$ for $\SNR \in [0, 3],$ whereas for $\SNR > 3$ both terms take the value $1.$
\section{Bandit Algorithms over an AWGN Channel}

\subsection{Our Encoding Algorithm: $\CAS$}
 A crucial component of all our schemes is an encoding strategy  we refer to as { \em centered amplitude scaling} ($\CAS$). $\CAS$ simply centers the input $X$ around side information $S$ and multiplies it by $\theta$ to get \[C_{\CAS}(X, S; \theta):= \theta (X-S),\]
which is transmitted over the AWGN channel. If the input  and side information are such that $\E{(X -S)^2}\leq \frac{P^2}{\theta^2},$ then $C(X, S)$ satisfies the power constraint  \eqref{eq:Power_Constraint}.

We now build up to our main algorithm in the following two subsections.
\subsection{Warm-up: $\ZSUCB$ bandit algorithm}

We begin by describing a simple  warm-up which uses  $\CAS$ with parameter $\theta=\frac{\sqrt{P}}{B}$ as the encoding algorithm at each round, and  the $\ZSUCB$ bandit algorithm described below.

The $\ZSUCB$ bandit algorithm is a standard bandit algorithm  that  sends no side information at all rounds (equivalent to sending zero as the side information at all rounds). Additionally, $\ZSUCB$ simply scales channel output $Y_t$ by the inverse of the scaling parameter used by $\CAS$,  $\frac{1}{\theta}=\frac{B}{\sqrt{P}},$ to get 
\[\tilde{X}_{t, A(t)}=\frac{B}{\sqrt{P}} Y_t,\] which it uses as the reward estimate for round $t.$ $\ZSUCB$  along with its interaction with $\CAS$ is described in Algorithm \ref{a:ZSUCB}.

\setcounter{figure}{0}
\begin{figure}[t]

\centering
\begin{tikzpicture}[scale=1, every node/.style={scale=1}]
\node[draw,text width= 12cm, text height=,] {%
\begin{varwidth}{\linewidth}

\algrenewcommand\algorithmicindent{0.7em}

\begin{algorithmic}[1]
\Statex   \hspace{-0.6cm} \textbf{Parameters:} $\eta=\frac{B^2}{\SNR}+1$, $\theta=\frac{\sqrt{P}}{{B}}$
\For{$t \in [T]$}
\State {\color{red1} \textbf{At Learner}}
\Indent \State    $\displaystyle{A_t = \arg\max_{k\in K} \UCB_k\left(t-1; \eta\right)}$ and $S_t=0$ 
\State Send $A_t,$ $S_t$ to Client
\EndIndent

\State {\color{teal} \textbf{At Client}}
\Indent
\State Play arm $A_t$ and observe $X_{t, A_t}$
\State Send $C_{\CAS}( X_{t, A_t}, 0; \theta)$ over the AWGN  channel
\EndIndent

\State {\color{red1} \textbf{At Learner}}
\Indent
\State Observe AWGN Output $Y_{t}=C_{\CAS}( X_{t, A_t}, 0)+Z_t$
\State Use $\tilde{X}_{t, A_t}=\frac{Y_t}{\theta} $ as the $t$th round reward 
\State Update UCB index for all arms
\EndIndent
\EndFor
\end{algorithmic}  
\end{varwidth}};
 \end{tikzpicture}
 \renewcommand{\figurename}{Algorithm}
 \caption{$\ZSUCB$ bandit algorithm along with $\CAS$ encoding algorithm}\label{a:ZSUCB}
\vspace{-0.5cm}  
 \end{figure}

\begin{thm}\label{t:UCB0}
For all bandit instances $\nu \in \cE$, the $\ZSUCB$ algorithm  along with $\CAS$ encoding   satisfies the power constraint \eqref{eq:Power_Constraint} at all rounds and has the following regret guarantee:
\[R_T(\nu, \UCB0, \CAS)  \leq 8 \sqrt{ \frac{B^2}{\SNR}+1} \cdot \sqrt{KT \ln T} + 6 KB.\]
\end{thm}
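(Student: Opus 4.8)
The plan is to reduce Theorem~\ref{t:UCB0} to the generic $\UCB$ guarantee of Lemma~\ref{l:UCB} by treating the scaled channel output $\tilde X_{t,A_t} = Y_t/\theta$ as the ``reward feedback'' in an ordinary bandit game, and then checking that this surrogate reward has (i) the correct mean, (ii) a controlled subgaussian variance factor, and (iii) a controlled second moment. First I would verify the power constraint: since $\theta = \sqrt P/B$ and, by \eqref{eq:Bounded_Mean}, $\E{X_{t,A_t}^2}\le B^2$, we get $\E{C_{\CAS}(X_{t,A_t},0;\theta)^2} = \theta^2\E{X_{t,A_t}^2} \le (P/B^2)\cdot B^2 = P$, so \eqref{eq:Power_Constraint} holds at every round. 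Next, $\tilde X_{t,A_t} = \frac{1}{\theta}\bigl(\theta X_{t,A_t} + Z_t\bigr) = X_{t,A_t} + Z_t/\theta$, so $\E{\tilde X_{t,A_t}} = \mu_{A_t}$ and the surrogate reward is an unbiased estimate of the true mean; moreover $Z_t$ is independent across rounds and of the rewards, so $\{\tilde X_{t,i}\}$ is i.i.d.\ per arm with independence across arms, exactly the structure Lemma~\ref{l:UCB} requires.

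The core computation is the variance-factor bookkeeping. I would write $\tilde X_{t,A_t} - \mu_{A_t} = (X_{t,A_t}-\mu_{A_t}) + Z_t/\theta$ as a sum of independent centered terms: the first is subgaussian with variance factor $1$ by assumption, and $Z_t/\theta$ is Gaussian with variance $\sigma^2/\theta^2 = \sigma^2 B^2/P = B^2/\SNR$, hence subgaussian with variance factor $B^2/\SNR$. Using the standard fact that the MGF of an independent sum multiplies, $\ln\E{\exp(\lambda(\tilde X_{t,A_t}-\mu_{A_t}))} \le \frac{\lambda^2}{2}(1 + B^2/\SNR)$, so the surrogate reward is subgaussian with variance factor $\alpha^2 = \frac{B^2}{\SNR}+1$, which is precisely the parameter $\eta$ fed to $\UCB$ in Algorithm~\ref{a:ZSUCB}. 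For the second-moment bound, $\E{\tilde X_{t,A_t}^2} = \E{X_{t,A_t}^2} + \sigma^2/\theta^2 \le B^2 + B^2/\SNR \le B^2(1 + 1/\SNR)$; I would then absorb this into the bound by noting it is at most $\bigl(\frac{B^2}{\SNR}+1\bigr)B^2$ or, more simply, since $B\ge 1$, bound the relevant quantity so that the ``$B$'' appearing in Lemma~\ref{l:UCB} can be replaced by something $\le B\sqrt{\frac{B^2}{\SNR}+1}$; I need to be a little careful here to land exactly on the stated $6KB$ term rather than something larger.

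With these three facts in hand, Lemma~\ref{l:UCB} applied with $\alpha^2 = \frac{B^2}{\SNR}+1$ gives $R_T \le 8\sqrt{\alpha^2 KT\ln T} + 6K\cdot(\text{second-moment bound}) = 8\sqrt{\bigl(\frac{B^2}{\SNR}+1\bigr)KT\ln T} + 6KB$, which is the claim, provided the additive term works out. The one subtlety worth flagging: Lemma~\ref{l:UCB} as stated has the second moment bounded by ``$B$'' (not $B^2$), so I would need to track whether the intended reading is $\E{|\tilde X|}\le B$ or $\sqrt{\E{\tilde X^2}}\le B$; under the latter reading, $\sqrt{\E{\tilde X_{t,A_t}^2}} \le B\sqrt{1+1/\SNR}$, and since $\sqrt{1+1/\SNR}\le\sqrt{B^2/\SNR+1}$ when $B\ge1$ is false in general, I should instead just use $\sqrt{\E{\tilde X_{t,A_t}^2}}\le \sqrt{B^2+B^2/\SNR}$ and observe this is $\le B\sqrt{1+1/\SNR}$; to get a clean $6KB$ one wants $B$ large enough or to note $\SNR$ only helps, so I expect the paper either restricts attention to $\SNR \le \text{const}$ for this warm-up or simply states the bound with the understanding that $1/\SNR \le$ something.

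\medskip

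The main obstacle is thus not conceptual — the reduction is essentially immediate — but the careful matching of constants in the additive $\mathcal{O}(KB)$ term, making sure the second-moment inflation by the factor $(1+1/\SNR)$ is correctly folded either into the $\sqrt{KT\ln T}$ term or shown to be harmless for the regime of interest. Everything else (power constraint, unbiasedness, i.i.d.\ structure, variance-factor addition for independent subgaussians) is routine.
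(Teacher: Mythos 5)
Your reduction is exactly the paper's proof: verify the power constraint via \eqref{eq:Bounded_Mean}, write $\tilde X_{t,A_t}=X_{t,A_t}+Z_t/\theta$, conclude unbiasedness and a subgaussian variance factor of $\frac{B^2}{\SNR}+1$ by independence, and invoke Lemma~\ref{l:UCB} with $\eta=\alpha^2=\frac{B^2}{\SNR}+1$.

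On the one point you flag and leave unresolved: the additive $6KB$ term does go through unchanged, and the reason is that the second-moment hypothesis in Lemma~\ref{l:UCB} is only used to control the suboptimality gaps $\Delta_k=\mu_{i^\ast}-\mu_k$ (the additive term in the standard $\UCB$ analysis is of the form $3\sum_k\Delta_k$), not the raw second moment of the feedback. Since $\tilde X_{t,A_t}$ is unbiased for $\mu_{A_t}$, the surrogate game has the same means and hence the same gaps, bounded by $2\sqrt{\E{X_{1,i}^2}}\le 2B$; the inflation of $\E{\tilde X_{t,A_t}^2}$ to $B^2(1+1/\SNR)$ is irrelevant here because all of the extra fluctuation is already accounted for in the variance factor $\alpha^2$ feeding the $8\sqrt{\alpha^2 KT\ln T}$ term. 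The paper glosses over this as well, so your instinct to check it was sound; with that observation the constant $6KB$ lands exactly as stated and no restriction on $\SNR$ is needed.
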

\begin{proof}
Using \eqref{eq:Bounded_Mean}, we have that $\E{\big(\frac{\sqrt{P}}{B}X\big)^2}\leq P.$
We will show that the reward estimates $\tilde{X}_{t, A_t}$ used by the UCB algorithm are unbiased and subgaussian with a variance factor of $\frac{B^2}{\SNR}+1$. 
First, notice that $\tilde{X}_{t, A_t}=\frac{B Y_t}{\sqrt{P}}$ and $Y_t=\frac{\sqrt{P}{X}_{t, A_t}}{B}+ Z_t.$ Combining these, we get
\[\tilde{X}_{t, A_t}= X_{t, A_t} + \frac{B Z_t}{\sqrt{P}}.\]
 Note that $X_{t, A_t}$  and $Z_t$ are independent and subgaussian with variance factors $1$ and $\sigma^2$, respectively. Therefore, the centered reward estimate, $\tilde{X}_{t, A_t} -\mu_{A_t},$ is subgaussian with variance factor $\frac{B^2}{\SNR}+1$. 
 
 Hence, this algorithm reduces our problem to playing a bandit game with subgaussian rewards that have a variance factor of $\alpha^2=\frac{B^2}{\SNR}+1.$ Since the learner uses the UCB algorithm with parameter $\eta=\alpha^2,$ we can bound the regret by using Lemma \ref{l:UCB}.
\end{proof}

\begin{rem}\label{r:privacy}
The regret bound of $\ZSUCB$ matches our lower bound when $B=\oO(1).$
However, for larger values of $B,$ the regret bound of $\ZSUCB$ is far from our lower bound. While this is a significant weakness of $\ZSUCB$, it is still appealing in settings where the learner does not want to send any side information to the client due to privacy concerns.
\end{rem}

\subsection{ $\UEUCB$ bandit algorithm}
 The regret bound in Theorem \ref{t:UCB0} is off by a multiplicative factor of ${\oO\big(B\big)}$ from our lower bound. This gap results from our encoding algorithm, which leads to subgaussian rewards with a variance factor of $\frac{B^2}{\SNR}$. On the other hand, to match  the lower bound in Theorem \ref{t:lb}, one requires a strategy  such that the encoded rewards are subgaussian with a variance factor of $\oO\left( \frac{1}{\SNR}\right).$ We take the first step in arriving at such an algorithm in our second scheme, in which we combine a two-phase bandit algorithm, { \em Uniform Exploration-Upper Confidence Bound } ($\UEUCB$), with the $\CAS$ encoding algorithm. 
 
 In the first phase, $\UEUCB$ performs uniform exploration, and in the second phase, $\UEUCB$ uses the $\UCB$ algorithm to select the arm. We will now describe in detail the $\UEUCB$ algorithm and the encoding strategy at the client.
 
 \paragraph{Phase 1:}
In  the first phase, the learner uniformly explores all arms, and sends no side information to the clients. The communication strategy in this phase is the same as in the previous scheme. In particular,  we use $\CAS$ with parameter $\theta_1=\frac{\sqrt{P}}{B}$ for encoding at the client. For decoding at the learner,  $\UEUCB$ scales the channel output $Y_t$ by $1/\theta_1$
 to get
 \[\tilde{X}_{t, A(t)}=\frac{B}{\sqrt{P}} Y_t.\] We run the phase one of $\UEUCB$ for $K \tau$ rounds, where\footnote{For simplicity of notation, we assume that $\frac{B^2}{\SNR},$ $\frac{1}{\SNR},$ and $\log B$ are integers throughout the paper. If not, the rounding only slightly impacts the constant factors in our results.} $\tau \coloneqq \frac{B^2}{\SNR}+1$.
 
 \paragraph{Phase 2:}
  For all $k \in [K],$ denote by $\hat{\mu}_k$  the mean estimates formed for all arms at the end of the first phase. Then, in the second phase, the learner runs the classic $\UCB$ algorithm  treating $\tilde{X}_{t, A_t}$  as the reward to decide on an arm $A_t$. It then informs the client about arm $A_t$ and the mean estimate   formed for this arm after phase one, $\hat{\mu}_{ A_t}$.
  
  On the client's side, 
  the reward  corresponding to $A_t$ is encoded by using $\CAS$ with parameter $\theta_2=\sqrt{\frac{P}{2}}$ to get
 \[
 C_{\CAS}(X_{t, A_t}, \mu_{A_t}; \theta_2):= \sqrt{\frac{P}{2}} (X_{t, A_t}-\hat{\mu}_{A_t}).
 \]

 The decoding at the learner is simply done by scaling $Y_t$ by $\frac{1}{\theta_2}$  and adding the mean estimate $\hat{\mu}_{A_t}$ to get
 \[ 
 \tilde{X}_{t, A_t}=\sqrt{\frac{2}{P}}Y_t+\hat{\mu}_{A_t}. 
 \]

$\UEUCB$ along with its interaction with $\CAS$ is described in Algorithm \ref{a:UEUCB}.

\begin{rem}
 {We will soon show in the proof of Theorem \ref{t:UEUCB} that $\E{\big(X_{t, A_t}-\hat{\mu}_{A_t}\big)^2} \leq 2.$ Thus our choice of $\theta_2$ ensures that the power constraint is satisfied. Moreover, for $B\geq 2$, $\theta_2$ is greater than $\theta_1$, which essentially means that the client can scale the centered reward in the second phase by a bigger factor as compared to the scaling factor used in Algorithm \ref{a:ZSUCB}. In both algorithms, since the learner multiplies $Y_t$ by the inverse of the scaling factor used at the client,  the greater value of scaling factor results in smaller variances for the decoded rewards than in Algorithm \ref{a:ZSUCB}.}
 \end{rem}
 
 \mnote{Using big for the expectation in the first line makes the superscript 2 to stand out. So keeping it as is.}
 
 \renewcommand{\figurename}{Algorithm}
\begin{figure}[t]

\centering
\begin{tikzpicture} [scale=1, every node/.style={scale=1}]
\node[draw,text width= 12cm, text height=,] {%
\begin{varwidth}{\linewidth}
            
            \algrenewcommand\algorithmicindent{0.7em}
\begin{algorithmic}[1]

\Statex  \hspace{-0.6cm} \textbf{Parameters:}  $\theta_1=\frac{\sqrt{P}}{{B}}$, $\tau=\frac{B^2}{\SNR}+1$
  \Statex \hspace{-0.5cm}{\Large \color{blue}Phase $1$}
\For{ $k \in [K]$} 
 \For{ $t \in \{(k-1)\tau+1, \ldots, K\tau\}$}
 \State {\textbf{ \color{red1}At Learner}}
 \Indent
\State   $A_t = k$ and $S_t=0$
\State Send $A_t, S_t$ to Client
\EndIndent
\State { \textbf{\color{teal}At Client}} 
\Indent
\State Play arm $A_t$ and observe $X_{t, A_t}$
\State  Send $C_{\CAS}( X_{t, A_t}, 0; \theta_1)$ over the AWGN channel
\EndIndent

\State {\textbf{\color{red1}At Learner}} 
\Indent
\State Observe AWGN Output $Y_{t}=C_{\CAS}( X_{t, A_t}, 0; \theta_1)+Z_t$
\State Use $\tilde{X}_{t, A_t}=\frac{Y_t}{\theta_1} $ as the $t$th round reward 

\EndIndent

\EndFor

\State {\textbf{\color{red1}At Learner}}     
\Indent
\State  Form  a mean estimate for arm $k$ 
\[\displaystyle{\hspace{1cm}\hat{\mu}_{ k} = \frac{1}{\tau}\cdot \sum_{t = (k-1)\tau+1}^{ k\tau} \tilde{X}_{t, k}}\]
\EndIndent
\EndFor

\Statex~
\Statex  \hspace{-0.6cm} \textbf{Parameters:} $B_{2}^2= 2$, $\eta_2= \frac{2}{\SNR} +1 $, $\theta_2=\sqrt{\frac{P}{2} }$
\Statex \hspace{-0.5cm}{\Large \color{blue}Phase $2$}
\For{$t \in \{K\tau+1, \ldots, T\}$}
\State{\textbf{\color{red1} At Learner}}
\Indent
\State   $\displaystyle{A_t = \arg\max_{k\in K} \UCB_k\left(t-1; \eta_2\right)}$ and $S_t=\hat{\mu}_{A_t}$
\State Send $A_t, S_t$ to Client
\EndIndent
\State{\textbf{\color{teal}At Client}} 
\Indent
\State Play arm $A_t$ and observe $X_{t, A_t}$
\State  Send $C_{\CAS}( X_{t, A_t}, S_t; \theta_2)$ over the AWGN channel
\EndIndent

\State {\textbf{\color{red1}At Learner}} 
\Indent
\State Observe AWGN Output $Y_{t}=C_{\CAS}( X_{t, A_t}, S_t; \theta_2)+Z_t$
\State Use $\tilde{X}_{t, A_t}=\frac{Y_t}{\theta_2} +S_t$ as the $t$th round reward 
\State Update UCB index for all arms
\EndIndent

\EndFor

\end{algorithmic}  
\end{varwidth}};
 \end{tikzpicture}
 \renewcommand{\figurename}{Algorithm}
 \caption{$\UEUCB$ Algorithm along with the $\CAS$ encoding algorithm}\label{a:UEUCB}
\vspace{-0.5cm}  
 \end{figure}

  \begin{thm}\label{t:UEUCB}
For all bandit instances $\nu \in \cE$, the $\UEUCB$  algorithm  along with $\CAS$ encoding  as described in Algorithm \eqref{a:UEUCB} satisfies the power constraint \eqref{eq:Power_Constraint} at all rounds and has the following regret guarantee:
\vspace{-0.5cm}
\eq{
R_T(\nu, \UEUCB, \CAS)  \leq  \frac{2KB^3}{\SNR} +8KB  +  8\sqrt{\frac{2}{\SNR}+1} \sqrt{KT \ln T}.}
\end{thm}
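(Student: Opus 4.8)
The plan is to break the regret into the two phases and bound each separately, using Lemma~\ref{l:UCB} (the standard UCB guarantee) as a black box for Phase~2.

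\textbf{Phase 1 (uniform exploration).} The length of Phase~1 is $K\tau = K(B^2/\SNR + 1)$ rounds, and each round contributes at most $2\mu_{i^\ast} - \mu_{A_t} - \mu_{i^\ast}$ in expectation; more crudely, the instantaneous regret at any round is at most $2B$ since $|\mu_i| \le \sqrt{\E{X_{1,i}^2}} \le B$ for all $i$ (so gaps are at most $2B$). Hence the Phase~1 regret is at most $2B \cdot K\tau = 2KB(B^2/\SNR + 1) = 2KB^3/\SNR + 2KB$. This accounts for the $\frac{2KB^3}{\SNR}$ term and part of the $8KB$.

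\textbf{Correctness of the decoding / power constraint.} First I would verify that in Phase~1 the decoded reward is $\tilde X_{t,A_t} = X_{t,A_t} + \frac{B}{\sqrt P}Z_t$, exactly as in the proof of Theorem~\ref{t:UCB0}, so it is unbiased with subgaussian variance factor $\frac{B^2}{\SNR}+1 = \tau$; the power constraint holds by \eqref{eq:Bounded_Mean}. The key intermediate claim is then that the Phase~1 mean estimate $\hat\mu_k = \frac{1}{\tau}\sum_{t=(k-1)\tau+1}^{k\tau}\tilde X_{t,k}$ satisfies $\E{(\hat\mu_k - \mu_k)^2} \le \tau \cdot \frac{1}{\tau} = 1$ (averaging $\tau$ independent reward estimates each with variance $\le \tau$). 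Consequently $\E{(X_{t,A_t} - \hat\mu_{A_t})^2} = \E{(X_{t,A_t}-\mu_{A_t})^2} + \E{(\mu_{A_t}-\hat\mu_{A_t})^2} \le 1 + 1 = 2$, using independence of the fresh Phase~2 reward from the Phase~1 estimate (the client is memoryless and $\hat\mu_{A_t}$ is $H_{K\tau}$-measurable). This justifies the choice $\theta_2 = \sqrt{P/2}$: $\E{C_{\CAS}(X_{t,A_t},\hat\mu_{A_t};\theta_2)^2} = \frac{P}{2}\E{(X_{t,A_t}-\hat\mu_{A_t})^2} \le P$.

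\textbf{Phase 2 (UCB).} In Phase~2, $\tilde X_{t,A_t} = \sqrt{2/P}\,Y_t + \hat\mu_{A_t} = (X_{t,A_t} - \hat\mu_{A_t}) + \sqrt{2/P}\,Z_t + \hat\mu_{A_t} = X_{t,A_t} + \sqrt{2/P}\,Z_t$, so conditioned on the Phase~1 history, $\tilde X_{t,A_t} - \mu_{A_t}$ is a sum of the subgaussian $X_{t,A_t}-\mu_{A_t}$ (variance factor $1$) and independent Gaussian noise with variance $\frac{2}{P}\sigma^2 = \frac{2}{\SNR}$, hence subgaussian with variance factor $\eta_2 = \frac{2}{\SNR}+1$, and its second moment is bounded by a constant (I'd use $B$ or a small absolute constant as the $B$-parameter in Lemma~\ref{l:UCB}; the paper sets $B_2^2 = 2$, and since $B\ge 1$ one can just use $B$). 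Then apply Lemma~\ref{l:UCB} to the Phase~2 subgame (of length $\le T$ rounds) with $\alpha^2 = \eta_2 = \frac{2}{\SNR}+1$: this gives Phase~2 regret $\le 8\sqrt{(\frac{2}{\SNR}+1)KT\ln T} + 6KB$. Adding the two phases: $2KB^3/\SNR + 2KB + 6KB + 8\sqrt{(\frac{2}{\SNR}+1)KT\ln T} = \frac{2KB^3}{\SNR} + 8KB + 8\sqrt{\frac{2}{\SNR}+1}\sqrt{KT\ln T}$, as claimed.

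\textbf{Main obstacle.} The only subtle point is the conditioning argument: Lemma~\ref{l:UCB} is a statement about an i.i.d.\ bandit game, but in Phase~2 the reward estimates $\tilde X_{t,A_t}$ depend on the random quantity $\hat\mu_{A_t}$ inherited from Phase~1. One must argue that conditionally on the Phase~1 history, the Phase~2 rewards $\{\tilde X_{t,k}\}$ are i.i.d.\ across $t$ with the stated subgaussian parameter (uniformly in the realized $\hat\mu_k$), apply Lemma~\ref{l:UCB} conditionally, and then take expectation over the Phase~1 randomness — noting the resulting bound does not depend on $\hat\mu$. The subgaussianity of $\tilde X_{t,k}-\mu_k$ holds for \emph{every} fixed value of $\hat\mu_k$ (the additive shift $\hat\mu_k - \mu_k$ cancels), so this goes through cleanly; the second-moment bound needs the $\E{(\cdot)^2}\le 2$ estimate, which holds in expectation but for Lemma~\ref{l:UCB} one wants it to hold conditionally — here one can instead bound the second moment by an absolute constant times $B^2$ deterministically, or more carefully note that Lemma~\ref{l:UCB}'s $KB$ term only needs a bound on $\E{X_{t,k}^2}$, which is available unconditionally. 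I would state the conditioning step explicitly and otherwise let the routine subgaussianity algebra be brief.
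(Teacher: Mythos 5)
Your proposal is correct and follows essentially the same route as the paper's proof: bound the Phase~1 regret by $2B \cdot K\tau$, show $\E{(\hat\mu_k-\mu_k)^2}\le 1$ and hence $\E{(X_{t,A_t}-\hat\mu_{A_t})^2}\le 2$ to justify the power constraint with $\theta_2=\sqrt{P/2}$, observe that the Phase~2 decoded reward is $X_{t,A_t}+\sqrt{2/P}\,Z_t$ (subgaussian with variance factor $\frac{2}{\SNR}+1$), and invoke Lemma~\ref{l:UCB}. Your explicit treatment of the conditioning on the Phase~1 history is a point the paper leaves implicit, but it does not change the argument.
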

\begin{proof}

In the first phase, we incur  regret linear regret in the time horizon. Specifically, since the time horizon for the first phase is $K\tau=K (B^2/\SNR+1)$ and the gap between the mean of the optimal arm and the suboptimal arm can at most be $2B$, we have that the regret incurred in the first phase is $(KB^2/\SNR +K) \cdot 2B.$ Moreover, since the encoding of rewards on the client's side and decoding of the reward on the learner's side in the first phase is same as in  Algorithm \ref{a:ZSUCB}, by the same arguments, the power constraint is satisfied, and for all $t \in \{(k-1)\tau+1, \ldots,  k\tau\},$
\begin{equation}\label{e:phase1_var}
  \E{(\tilde{X}_{t, k}-\mu_k)^2} \leq \frac{B^2}{\SNR}+1.
\end{equation}

For the second phase, we  will first show that the power constraint is satisfied. Towards proving this, we will bound the variance of the mean estimates formed after the first phase. We have from the description of Algorithm \ref{a:UEUCB} that for all $k\in K,$
\eq{
\E{(\hat{\mu}_k-\mu_k)^2} &= \frac{1}{\tau^2} \sum_{t = (k-1)\tau+1}^{ k\tau} \E{(\tilde{X}_{t, k}-\mu)^2}\leq 1,
}
where the inequality follows from  \eqref{e:phase1_var} and the fact that $\tau=\frac{B^2}{\SNR}+1.$

Now,  note that for any round $t$ in phase 2,  we have
\eq{&\E{(X_{t,A_t} -\hat{\mu}_{A_t})^2} \\&=\E{(X_{t, A_t} -\mu_{A_t})^2}+\E{(\hat{\mu}_{A_t}-\mu_{A_t})^2} \leq 2,
}
where the equality follows from the independence of $X_{t,A_t}$  and $\hat{\mu}_{A_t},$ and   the inequality follows from the fact that both $(X_{t,A_t} -\mu_{A_t})$ and $(\hat{\mu}_{A_t}-\mu_{A_t})$ are subgaussian with a variance factor of $1$.

Therefore, scaling the centered reward $X_{t,k} -\hat{\mu}_k$ by $\frac{\sqrt{P}}{\sqrt{2}}$ satisfies the power constraint.

 The reward estimate used by the learner in phase 2 is \eq{\tilde{X}_{t, A_t}&=\frac{\sqrt{2}Y_t}{\sqrt{P}}+\hat{\mu}_{A_t}\\
&=X_{t, A_t} + \frac{\sqrt{2}Z_t}{\sqrt{P}}.
}
 This implies that the centered reward estimate, $\tilde{X}_{t, A_t} -\mu_{A_t}$, is  subgaussian with a variance factor of  $2/\SNR +1$ in the second phase. The proof is complete by upper bounding the regret in the second phase by using Lemma \ref{l:UCB} and summing the regret in the two phases.
\end{proof}

 \subsection{$\UEUCBP$ bandit algorithm}

 Algorithm \ref{a:UEUCB} improves over Algorithm \ref{a:ZSUCB} in the sense that the $T$-dependent term in its regret upper bound does not have the multiplicative $\oO(B)$ dependence. This term matches the lower bound in Theorem  \ref{t:lb} up to logarithmic factors in $T$. However, this comes at the cost of an additional additive term of $\oO(\frac{KB^3}{\SNR})$ in the regret bound of Algorithm \ref{a:UEUCB}. The strong dependence on $B$ in this additive term means that for large values of $B$, Algorithm \ref {a:UEUCB}    can be far from optimal. In this section,  we present our main scheme, which improves over this  shortcoming of Algorithm \ref{a:UEUCB}, and is only off by a minor additive term from our lower bound. Our scheme will combine an improved version of the $\UEUCB$ algorithm, $\UEUCBP$, and the $\CAS$ encoding algorithm.
 
 Like $\UEUCB$, $\UEUCBP$ is also a two-phase bandit algorithm, that performs uniform exploration in the first phase and runs the $\UCB$ algorithm in the second phase. The second phase of the algorithm proceeds in the same manner as that of the $\UEUCB$ algorithm, and  the improvement  over $\UEUCB$ is in the phase 1 of the algorithm. 
 
  \begin{figure}[t]
\centering
\begin{tikzpicture} [scale=1, every node/.style={scale=1}]
\node[draw,text width= 12cm, text height=,] {%
\begin{varwidth}{\linewidth}
            
            \algrenewcommand\algorithmicindent{0.7em}
\begin{algorithmic}[1]
\Statex  \hspace{-0.6cm} \textbf{Parameters:}  $L=2 \log B$, $\tau = \frac{2}{\SNR},$ $B_1^2=B^2$
\Statex \nonumber\For{$\ell \in [L]$} 
\Statex $\theta_{\ell}=\frac{\sqrt{P}}{B_{\ell}}$
\Statex \(B^2_{\ell+1}=\left(\frac{\frac{B^2_{\ell}}{\SNR}+1}{\tau}+1 \right)\)
\EndFor
\Statex \nonumber\For{$k \in [K]$} $\hat{\mu}_{0, k}=0$
\EndFor
  \Statex \hspace{-0.5cm}{\Large \color{blue}Phase $1$}

  \For{$\ell \in [L]$}
\For{ $k \in [K]$} 
 \For{ $t \in \{(\ell-1)K\tau +(k-1)\tau+1, \ldots, (\ell-1)K\tau+k\tau\}$}
 \State {\textbf{\color{red1}At Learner}}
 \Indent
\State   $A_t = k$ and $S_t=\hat{\mu}_{\ell-1, k}$
\State $A_t,$ $S_t$ to the client
\EndIndent
\State {\textbf{\color{teal}At Client}}
\Indent
\State  Play arm $A_t$ and observe $X_{t, A_t}$
\State Send $C_{\CAS}(X_{t, A_t}, S_t; \theta_{\ell})$ over the AWGN channel
\EndIndent
\State {\textbf{\color{red1}At Learner}}
 \Indent
 \State Observe AWGN Output $Y_t= C_{\CAS}(X_{t, A_t}, S_t; \theta_{\ell})+Z_t$
 \State Use $\tilde{X}_{t, A_t} = \frac{Y_t}{\theta_t}+S_t$ as the $t$th round reward
 \EndIndent
\EndFor
\State {\textbf{\color{red1}At Learner}}
 \Indent
 \State Form a mean estimate for arm $k$
 \[\hat{\mu}_{\ell, k}=\frac{1}{\tau}\cdot \sum_{t = (\ell-1)K\tau+(k-1)\tau+1}^{ (\ell-1)K\tau+k\tau} \tilde{X}_{t, k}\]
 \EndIndent

\EndFor
\EndFor

\Statex~
\Statex  \hspace{-0.6cm} \textbf{Parameters:} $\theta_{L+1}=\frac{\sqrt{P}}{B_{L+1}}$, $\eta_{L+1}=\frac{B^2_{L+1}}{\SNR}+1$
\Statex \hspace{-0.5cm}{\Large \color{blue}Phase $2$}
\For{$t \in \{K\tau+1, \ldots, T\}$}
\State{\textbf{\color{red1} At Learner}}
\Indent
\State   $\displaystyle{A_t = \arg\max_{k\in K} \UCB_k\left(t-1; \eta_{L+1}\right)}$ and $S_t=\hat{\mu}_{L, A_t}$
\State Send $A_t, S_t$ to Client
\EndIndent
\State{\textbf{\color{teal}At Client}} 
\Indent
\State Play arm $A_t$ and observe $X_{t, A_t}$
\State  Send $C_{\CAS}( X_{t, A_t}, S_t; \theta_{L+1})$ over the AWGN channel
\EndIndent

\State {\textbf{\color{red1}At Learner}} 
\Indent
\State Observe AWGN Output $Y_{t}=C_{\CAS}( X_{t, A_t}, S_t; \theta_{L+1})+Z_t$
\State Use $\tilde{X}_{t, A_t}=\frac{Y_t}{\theta_{L+1}}+S_t $ as the $t$th round reward 
\State Update UCB index for all arms
\EndIndent

\EndFor

\end{algorithmic}  
\end{varwidth}};
 \end{tikzpicture}

 \caption{$\UEUCBP$ Algorithm along with the $\CAS$ encoding algorithm}\label{a:UEUCBP}
\vspace{-0.5cm}  
 \end{figure}

\paragraph{Phase 1:}

 $\UEUCBP$, like $\UEUCB$, does uniform exploration in all rounds of phase 1. However, phase 1 is divided into $L$ different sub-phases, where the encoding of rewards is different in each sub-phase. The mean estimates formed in the previous sub-phase are used as side information in the current sub-phase and sent to the client for better encoding.

In more detail, we have $L=2 \log B$ sub-phases in total and each sub-phase runs for $K \tau$ rounds, where
\begin{align}\label{eq:tau}
\tau=\frac{2}{\SNR} \vee 2.
\end{align} Thus, phase 1 runs for $L \cdot K \tau \leq \frac{4K\log B}{\SNR}$ rounds.

\begin{rem}
Phase 1 in $\UEUCBP$ runs for $\oO{\big(\frac{K \log B}{\SNR}\big)}$ rounds as opposed to $\oO{\big(\frac{K B^2}{\SNR}\big)}$ rounds used in the  phase 1 of $\UEUCB$. We will show that despite performing uniform exploration for significantly fewer rounds for all arms, we will end up with mean estimates with the same variance as the means estimates formed after phase 1 of $\UEUCB$ because of a superior encoding protocol in this phase.
\end{rem}

The first sub-phase proceeds in exactly the same manner as phase 1 of  Algorithm \ref{a:UEUCB}. To describe the algorithm in the subsequent sub-phases, we set up some notation:  For all $k \in [K],$ denote by $\hat{\mu}_{\ell, k}$  the mean estimates formed for all arms at the end of the $\ell$th sub-phase. For a round $t$ in $\ell$th sub-phase, the learner informs the client about the arm $A_t$ it wants to play and the mean estimate formed for this arm in the $(\ell-1)$th sub-phase,  $\hat{\mu}_{\ell-1, A_t}.$

On the client's side, 
  the reward  corresponding to $A_t$ is encoded by using $\CAS$ with parameter $\theta_\ell=\frac{\sqrt{P}}{B_{\ell}}$ to get
 \[
 C_{\CAS}(X_{t, A_t}, \mu_{A_t}; \theta_\ell):= \frac{\sqrt{P}}{B_{\ell}} (X_{t, A_t}-\hat{\mu}_{\ell-1, A_t}).
 \]
 Here $B_\ell$, $\ell \in [L]$,  will be shown to be an upper bound on the $L_2$-norm of random variable $X_{t, A_t}-\hat{\mu}_{\ell-1, A_t}$ (see the proof of Theorem \ref{t:UEUCBP} ) and is  described by the recursion 
 \begin{align}\label{e:imp_rec}
B^2_{\ell+1}= \frac{\frac{B^2_{\ell}}{\SNR}+1}{\tau}+1   \quad \text{and} \quad B_1^2=B^2.
\end{align}

 The decoding at the learner is simply done by scaling $Y_t$ by $\frac{1}{\theta_\ell}$  and adding the mean estimate $\hat{\mu}_{\ell, A_t}$ to get
 \[ 
 \tilde{X}_{t, A_t}=\frac{B_{\ell}}{\sqrt{P}} Y_t+\hat{\mu}_{A_t}. 
 \]

\paragraph{Phase 2:} Phase 2 proceeds in the same manner as the phase 2 of Algorithm \ref{a:UEUCB}, with $\theta_{L+1}$ used for encoding at $\CAS$ and the parameter $\eta_{L+1}$ of the $\UCB$ algorithm  set to $\frac{B^2_{L+1}}{\SNR}+1.$

\begin{rem}
Equation~\eqref{e:imp_rec} sheds some light on the reasons behind $\UEUCBP$  requiring  fewer uniform exploration rounds relative to $\UEUCB$ to come up with mean estimates of similar variances. Essentially, the centered reward in the $\ell$th sub-phase has a much smaller $L_2$-norm than the centered reward in the $(\ell-1)$th sub-phase. This smaller second moment allows the client to scale the centered reward by a  larger factor, leading to  smaller variances for the decoded reward. This, in turn, leads to mean estimates formed in that sub-phase to have smaller variances or, in other words, are more accurate. Then, in the $(\ell+1)$th sub-phase, since we center the rewards around these more accurate mean estimates, the centered rewards have even smaller $L_2$ norms. This positive reinforcement cycle leads to the variance of the mean estimates decreasing exponentially with respect to sub-phases, which, in turn, leads to much fewer uniform exploration rounds to come up with mean estimates with the same variance as those after phase 1 of $\UEUCB.$
\end{rem}

$\UEUCBP$, along with its interaction with $\CAS$, is described in Algorithm \ref{a:UEUCBP}.

\begin{thm}\label{t:UEUCBP}
For all bandit instances $\nu \in \cE$, the $\UEUCBP$ bandit algorithm  along with $\CAS$ encoding algorithm  as described in Algorithm \eqref{a:UEUCBP} satisfies the power constraint \eqref{eq:Power_Constraint} at all rounds and has the following regret guarantee:
\eq{R_T(\nu, \UEUCBP, \CAS)  
\leq  \frac{8 K B \log B}{\SNR \wedge 1}+6KB + 8 \sqrt{\left(\frac{4}{\SNR}+1\right)} \cdot \sqrt{ KT \ln T}.}
\end{thm}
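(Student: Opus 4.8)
The plan is to follow the two-phase template of the proof of Theorem~\ref{t:UEUCB}, but to replace its single variance estimate for the uniform-exploration phase by an induction across the $L=2\log B$ sub-phases. Write $R_T(\nu,\UEUCBP,\CAS)=R^{(1)}+R^{(2)}$ for the contributions of phases~1 and~2. For $R^{(1)}$ I use only the crude per-round bound $\mu_{i^\ast}-\mu_{A_t}\le 2\max_k|\mu_k|\le 2B$ (the last step by \eqref{eq:Bounded_Mean}); since phase~1 lasts $LK\tau=2\log B\cdot K\cdot\bigl(\tfrac{2}{\SNR}\vee2\bigr)=\tfrac{4K\log B}{\SNR\wedge1}$ rounds (using \eqref{eq:tau}), this already gives $R^{(1)}\le\tfrac{8KB\log B}{\SNR\wedge1}$, the first term of the claimed bound.

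The core step is the invariant: for each $\ell\in\{1,\dots,L+1\}$ (with $\ell=L+1$ denoting phase~2), each arm $k$, and each round $t$ of that (sub-)phase, $\E{(X_{t,k}-\hat\mu_{\ell-1,k})^2}\le B_\ell^2$. The base case $\ell=1$ is clear since $\hat\mu_{0,k}=0$ and $\E{X_{1,k}^2}\le B^2=B_1^2$. The inductive step hinges on the fact that the side information cancels in the decoded reward: from $C_{\CAS}(X_{t,k},\hat\mu_{\ell-1,k};\theta_\ell)=\tfrac{\sqrt P}{B_\ell}(X_{t,k}-\hat\mu_{\ell-1,k})$ and the learner's rescale-and-recenter step one gets $\tilde X_{t,k}=X_{t,k}+\tfrac{B_\ell}{\sqrt P}Z_t$, which involves only the fresh reward and noise. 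Consequently: (i) the power constraint holds, as $\E{C_{\CAS}^2}=\tfrac{P}{B_\ell^2}\E{(X_{t,k}-\hat\mu_{\ell-1,k})^2}\le P$ by the inductive hypothesis; (ii) $\tilde X_{t,k}-\mu_k$ is subgaussian with variance factor $1+B_\ell^2/\SNR$, and these variables are i.i.d.\ over the $\tau$ rounds in which arm $k$ is pulled during sub-phase $\ell$, whence $\E{(\hat\mu_{\ell,k}-\mu_k)^2}\le\tfrac{1}{\tau}\bigl(1+B_\ell^2/\SNR\bigr)$; and (iii) the fresh reward of sub-phase $\ell+1$ is independent of $\hat\mu_{\ell,k}$, so $\E{(X_{t,k}-\hat\mu_{\ell,k})^2}=\E{(X_{t,k}-\mu_k)^2}+\E{(\hat\mu_{\ell,k}-\mu_k)^2}\le 1+\tfrac{1+B_\ell^2/\SNR}{\tau}=B_{\ell+1}^2$ by the recursion \eqref{e:imp_rec}. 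The delicate point is in phase~2, where $A_t$ is random: because $\hat\mu_{L,k}$ cancels in every phase-2 reward estimate, the phase-2 UCB indices and hence $A_t$ depend only on phase-2 fresh randomness, so $A_t$ is independent of $(X_{t,k},\hat\mu_{L,k})$, which is exactly what makes the relevant expectations factor over the choice of played arm in both the power-constraint check and the variance bound.

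It remains to unroll \eqref{e:imp_rec}. Since $\tau=\tfrac{2}{\SNR}\vee2$ forces $\SNR\tau\ge2$ and $\tau\ge2$, we get $B_{\ell+1}^2\le\tfrac12B_\ell^2+\tfrac32$, i.e.\ $B_{\ell+1}^2-3\le\tfrac12(B_\ell^2-3)$, so $B_{L+1}^2-3\le 2^{-L}(B^2-3)=(B^2-3)/B^2\le 1$ (as $2^{L}=B^2$), giving $B_{L+1}^2\le 4$; this is the quantitative form of the ``positive-reinforcement'' phenomenon, whereby $\oO(\log B)$ sub-phases suffice to contract the $L_2$-norm of the centered reward to $\oO(1)$. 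For $R^{(2)}$ I treat phase~2 as a standalone UCB game over the remaining rounds (at most $T$): by the invariant the power constraint holds, and the estimates $\tilde X_{t,A_t}=X_{t,A_t}+\tfrac{B_{L+1}}{\sqrt P}Z_t$ are unbiased for $\mu_{A_t}$, i.i.d.\ per arm, and subgaussian with variance factor $\eta_{L+1}=\tfrac{B_{L+1}^2}{\SNR}+1\le\tfrac{4}{\SNR}+1$, so Lemma~\ref{l:UCB} with $\alpha^2=\tfrac{4}{\SNR}+1$ gives $R^{(2)}\le 8\sqrt{(\tfrac{4}{\SNR}+1)KT\ln T}+6KB$. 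Summing $R^{(1)}$ and $R^{(2)}$ yields the claim. I expect the main obstacle to be making the independence bookkeeping of the inductive step watertight — in particular verifying that the side-information cancellation truly decouples $A_t$ from the phase-1 mean estimates so the expectations split over the played arm — whereas solving the recursion, though it is the conceptual crux, is essentially a one-line computation.
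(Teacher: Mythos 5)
Your proposal is correct and follows essentially the same route as the paper's proof: the inductive invariant $\E{(X_{t,k}-\hat\mu_{\ell-1,k})^2}\le B_\ell^2$, the crude $2B$-per-round bound for phase~1, unrolling the recursion \eqref{e:imp_rec} to get $B_{L+1}^2\le 4$, and Lemma~\ref{l:UCB} for phase~2. Your affine contraction $B_{\ell+1}^2-3\le\tfrac12(B_\ell^2-3)$ is a cosmetic variant of the paper's $B_{\ell+1}^2-2(\tfrac{\SNR\wedge1}{2}+1)\le\tfrac12\bigl(B_\ell^2-2(\tfrac{\SNR\wedge1}{2}+1)\bigr)$, and your explicit justification that the cancellation of the side information decouples the phase-2 arm choice from the phase-1 estimates is a point the paper passes over with ``by precisely the same arguments.''
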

The key to the proof of Theorem \ref{t:UEUCBP} is to show that the second moment of the centered rewards $X_{t, A_t}-\hat{\mu}_{\ell-1, A_t}$ is less than $B_\ell,$ $\ell \in [L],$ when round $t$ belongs to the $\ell$th sub-phase of phase 1, and when round $t$ belongs to phase 2, it is less than $B_{L+1}.$ Similar arguments as in the proof of Theorem \ref{t:UEUCB} can show this. We defer the details to the Appendix.

\begin{rem}
The $\ln T$ factor in the third term of Theorem \ref{t:UEUCBP} can be removed by using {\tt MOSS} \cite{audibert2009minimax} instead of the $\UCB$ algorithm. Our work focuses on understanding how the regret grows as a function of $\SNR$; we are not as concerned with logarithmic factors in $T.$ Hence, we chose the more popular $\UCB$ algorithm.
\end{rem}



 
 
 
\section{Concluding Remarks}
 The near-optimal Algorithm \ref{a:UEUCBP} use a highly adaptive encoding protocol, where the history of the game was used to refine the encoding in a particular round. On the other hand, \cite{acharya2021information} showed for distributed convex, Lipschitz optimization under information constraints, a fixed, non-adaptive encoding protocol gives optimal performance. This suggests an interesting distinction between encoding for distributed sequential decision-making problems and distributed optimization problems. 
 
 An interesting open problem is closing the additive $\frac{8 K B \log B}{\SNR \wedge 1}$ gap between our upper and lower bound. Furthermore, the refinement to our model where the learner-to-client communication is also over an AWGN channel and the multi-client setting are interesting future research directions.
 
 \section*{Acknowledgement}
 This work is supported by the Singapore National Research Foundation (NRF) Fellowship programme (Grant Numbers: A-0005077-01-00 and A-0008064-00-00) and an MOE Tier 1 Grant (Grant Number: A-0009042-01-00).
 

 \bibliographystyle{IEEEtranS}
\appendix
\onecolumn

\section{Proof of Theorem \ref{t:UEUCBP} (Regret bound of $\UEUCBP$)}
We will first show that the centered rewards transmitted by the client satisfy the power constraint. Note that the centered reward coded by the client at any round in the $\ell$th sub-phase of phase  1 is $X_{t, A_t} -\hat{\mu}_{\ell-1, A_t}$. Furthermore, the client scales the reward by a factor of $\frac{\sqrt{P}}{B_{\ell}}$. Thus, we need to show that $\E{(X_{t, A_t} -\hat{\mu}_{\ell-1, A_t})^2} \leq B^2_\ell $.
Towards proving this, we write
\eq{&
\E{(X_{t, A_t} -\hat{\mu}_{\ell-1, A_t})^2} \\&=  \E{(X_{t, A_t} -\mu_{A_t})^2} + \E{(\hat{\mu}_{\ell-1, A_t}-\mu_{A_t})^2}\\
&\leq 1+ \E{(\hat{\mu}_{\ell-1, A_t}-\mu_{A_t})^2}.
}
From the description of Algorithm \ref{a:UEUCBP}, for $k \in [K],$ we have
\eq{\hat{\mu}_{\ell-1,k}= \frac{1}{\tau}\sum_{t= (\ell-2)K\tau + (k-1)\tau +1}^{ (\ell-2)K\tau+k \tau}\left( X_{t,k}+ \frac{B_{\ell-1}}{ \sqrt{P}}Z_t\right).}
Therefore, we have
\eq{
 \E{(\hat{\mu}_{\ell, A_t}-\mu_{A_t})^2}  \leq \frac{\frac{B_{\ell-1}^2}{\SNR}+1}{\tau}.
}
Using the preceding inequalities and definition of $B_{\ell}$ in \eqref{e:imp_rec}, we get
\eq{
\E{(X_{t, A_t} -\mu_{\ell-1, A_t})^2} \leq 1+\frac{\frac{B_{\ell-1}^2}{\SNR}+1}{\tau}=B_{\ell}^2.
}
To prove that the power constraint is satisfied in the second phase, note  that the centered reward transmitted by the client in the second phase is $ X_{t, A_t} -\hat{\mu}_{L, A_t}.$  Furthermore, client scales the reward by a factor of $\frac{\sqrt{P}}{B_{L+1}}.$ By precisely the same arguments as above, we can deduce that $\E{(X_{t, A_t} -\hat{\mu}_{L, A_t})^2}\leq B_{L+1}^2$.

To bound the regret, first note that we incur linear regret in phase 1 of $\UEUCBP$. Specifically, since the time horizon for phase 1 is $\frac{4 K\log B}{\SNR \wedge 1}$ and each round's regret is at the most $2B$, we incur  a total regret of $\frac{8 K B\log B}{\SNR \wedge 1}$ in the first phase.

To bound the regret in the second phase, note that the centered reward estimate $\tilde{X}_{t, A_t} -\mu_{A_t}$ in the second phase is subgaussian with a variance factor of $\frac{B^2_{L+1}}{\SNR}+1.$ Substituting the value of $\tau$ from ~\eqref{eq:tau}~ in \eqref{e:imp_rec}, we obtain
\eq{
B^2_{\ell+1}&=\frac{\frac{B^2_{\ell}}{\SNR} +1}{\tau}+1\\
&\leq \frac{B^2_{\ell}}{2} +\frac{\SNR \wedge 1}{2}+1.
}
Rearranging the terms, we have
\eq{
B^2_{\ell+1} -2\left(\frac{\SNR  \wedge 1}{2}+1\right) \leq \frac{B^2_{\ell} -2\left(\frac{\SNR  \wedge 1}{2}+1\right) }{2}.
}
Recursively applying the above identity, it follows that
\eq{
B^2_{L+1} -2\left(\frac{\SNR \wedge 1}{2}+1\right) \leq \frac{B^2 -2\left(\frac{\SNR \wedge 1}{2}+1\right) }{2^L}.
}
Substituting $L=2 \log B,$ we get $B^2_{L+1}$ less than $4$. 

The proof is then completed by using Lemma \ref{l:UCB} to bound the regret in the second phase and adding the regret from the first phase.

\section{Proof of Theorem \ref{t:lb} (Lower Bound)}\label{s:ProofLB}

We will first show that for $T \geq c_2 K$ and a universal constant $c_1 \in (0, 1),$ we have 
\[R_T^\ast(\SNR) \geq c_1KB. \]

From the definition of $R_T^\ast(\SNR) $ in~\eqref{e:minmax_regret}, it is enough to prove a lower bound on the average regret when the problem instances are uniformly chosen from a subset of $\mathcal{E}.$
We choose that finite subset to be the instances where a single arm has deterministic reward $B$, and the others deterministic $-B$. Therefore, we have $K$ different instances with the probability of each  instance occurring being $1/K.$ Now, observe that regardless of the algorithm, the probability of finding the $B$-reward arm is at the most $\frac{1}{2}$ in the first $\floor{K/2}$ arm pulls. Hence the regret in the first $\floor{K/2}$ arm pulls when averaged over the $K$ instances must be at   least $\frac{1}{2} \cdot \floor{K/2} \cdot 2B = c_1
KB. $

In the rest of proof we will show that for  $T \geq  \frac{c_2 K}{\frac{1}{2} \log (1+ \SNR) \wedge 1},$ we have
\begin{align}\label{e:main_lb}
R^\ast_T(\SNR) \geq  c_1 \left(\sqrt{KT} \cdot \frac{1}{ \sqrt{\SNR  \wedge 1}} \right).
\end{align}
Then, noting that both the bounds hold for $T \geq  \frac{c_2 K}{\frac{1}{2} \log (1+ \SNR) \wedge 1}$ and using the fact that $a \vee b \geq \frac{a+b}{2}$ completes the proof.



 Consider the bandit instance $\nu$ where rewards for each arm pull are either $-1$ or $1$ and the arms have the following means: $\mu_1=\Delta$, and $\mu_i=0$ for all $i \in \{2, \ldots, K\},$ where $\Delta$  is some parameter in $(0, 1/4)$ whose precise value we will set later. Using the fact that rewards for all arms lie in $[-1, 1]$ and by Hoeffding's lemma \cite[Lemma 2.2]{boucheron2013concentration}, it is easy to see that the centered rewards for all arms are subgaussian with a variance factor of 1. Therefore, it follows that   $\nu \in \mathcal{E}.$  Now, fix a bandit algorithm $\pi \in \Pi_T$ and an encoding scheme $C^T \in \mathcal{C}^T.$ Denote by $P^T$ the probability distribution of the sequence $( A_1, Y_1, A_2, Y_2, \ldots,  A_T, Y_T)$ induced by the algorithm $\pi$ and encoding strategy $C^T$ on the instance $\nu$.  
 
 Denote by $N_i(T)$ the number of times arm $i$ has been played up to and including  round $T$. Let 
 \[j = \argmin_{i > 1} \EE{P^T}{N_i(T)}.\]
 That is, $j$ is the arm pulled least amount of times from the arms $\{2, \ldots, K\}.$ Trivially,
 \begin{align}\label{e:npulls}
      \EE{P^T}{N_i(T)} \leq \frac{T}{K-1}.
 \end{align}
 
 Now consider another bandit instance $\nu^{\prime}$ where again the rewards are either $-1$ or $1$ with the following means: $\mu_1=\Delta$, $\mu_j=2 \Delta$, $\mu_i=0$ for all $i \in \{2, \ldots, K\}\setminus\{j\}.$ Let $Q^T$ be the  the probability distribution of the sequence $(A_1, Y_1,  A_2,  Y_2, \ldots, A_T, Y_T)$ induced by the algorithm $\pi$ and encoding strategy $C$ on the instance $\nu^{\prime}$.
 
 By  a simple calculation, we have
 \[R_T(\nu, \pi, C) \geq \frac{T\Delta}{2}  P^T(N_1(T) \leq T/2)  \quad  \text{and} \quad  R_T(\nu^{\prime}, \pi, C) \geq \frac{T\Delta}{2} Q^T(N_1(T) > T/2).\]
 
Combining the two inequalities, using the variational formula for total variation distance, and applying Pinsker's inequality, we get
 \begin{align} \nonumber
     R_T(\nu, \pi, C) + R_T(\nu^{\prime}, \pi, C)
& \geq \frac{T\Delta}{2} \left(  P^T(N_1(T) \leq T/2)  + Q^T(N_1(T) > T/2)\right)\\ \nonumber
 &= \frac{T\Delta}{2} \left( 1  - (P^T(N_1(T) > T/2) -Q^T(N_1(T) > T/2) )\right)\\ \nonumber
 & \geq \frac{T\Delta}{2} \left( 1- \mathrm{d_{TV}}(P^T, Q^T)\right)\\ \label{eq:REGRETKLT}
 & \geq \frac{T\Delta}{2} \left( 1- \sqrt{\frac{1}{2}D(P^T\mid\mid Q^T)}\right).
 \end{align}
 
 
Note that $P^T$ and $Q^T$ differ only when arm $j$ is played. 
Recall that the dependence of $Y$ on an encoding strategy $C$ and reward $X$ is given by \eqref{eq:Channel_Output}. For encoding scheme $C$, used in a round $t$, denote by $C( \cdot \mid x)$ the conditional distribution of $Y$ given that the  reward is $X=x$. Let $p_j\circ C$ and $q_j\circ C$ denote the distribution of $Y$ when arm $j$ is played at round $t$. Recalling the reward distribution for arm $j$, it follows that,
\begin{align*}
    (p_j \circ C) (Y=y) &= \frac{ C(y \mid 1)}{2}+ \frac{C(y\mid-1)}{2} \quad \text{and}\\  
    (q_j\circ C) (Y=y) &= \frac{ C(y \mid 1)}{2}+ \frac{C(y\mid-1)}{2} +\Delta (C(y \mid 1) -C(y\mid-1) ).
\end{align*}

Moreover, using similar arguments as in  \cite[Lemma 15.1]{lattimore2020bandit}, we can prove the following lemma. 
\begin{lem}\label{e:KL_ineq}
\[D(P^T\mid\mid Q^T) \leq \EE{P^T}{N_j(T)} \cdot \sup_{C \in \C} D(p_j \circ C \mid\mid q_j \circ C )\]
\end{lem}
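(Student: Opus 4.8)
The plan is to follow the divergence–decomposition argument of \cite[Lemma 15.1]{lattimore2020bandit}, the only new ingredient being that pulling arm $j$ now perturbs the law of the \emph{channel output} $Y_t$ rather than of the raw reward $X_{t,A_t}$. Concretely, I would unroll $D(P^T \mid\mid Q^T)$ round by round using the chain rule for relative entropy, observe that $\nu$ and $\nu'$ induce identical conditional laws in every round except those in which arm $j$ is played, and in each such round replace the history-dependent encoder actually used by the worst encoder in $\C$.

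First I would set $H_t = (A_1, Y_1, \dots, A_t, Y_t)$ and apply the chain rule along the sequence $A_1, Y_1, \dots, A_T, Y_T$ to write
\[
D(P^T \mid\mid Q^T) = \sum_{t=1}^{T} \EE{P^T}{ D\big(P_{A_t\mid H_{t-1}} \mid\mid Q_{A_t \mid H_{t-1}}\big)} + \sum_{t=1}^{T}\EE{P^T}{ D\big(P_{Y_t \mid H_{t-1},A_t} \mid\mid Q_{Y_t \mid H_{t-1},A_t}\big)}.
\]
The first sum vanishes: conditioned on $H_{t-1}$, the arm $A_t$ and the side information $S_t$ are produced by the fixed policy $\pi$ and fixed encoding family $C^T$, neither of which depends on the underlying bandit instance, so $P_{A_t \mid H_{t-1}} = Q_{A_t \mid H_{t-1}}$. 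For the second sum, conditioned on $(H_{t-1}, A_t)$ the side information $S_t$ is determined, and $Y_t = C_t(X_{t,A_t},S_t) + Z_t$, so the conditional law of $Y_t$ is arm $A_t$'s reward law pushed through the fixed encoder $C_t(\cdot,S_t)$ and corrupted by the Gaussian noise — i.e.\ it is $p^{\nu}_{A_t}\circ C_t(\cdot,S_t)$ under $P^T$ and $p^{\nu'}_{A_t}\circ C_t(\cdot,S_t)$ under $Q^T$. Since $\nu$ and $\nu'$ agree on every arm $i\neq j$, this term is $0$ whenever $A_t\neq j$, and it equals $D\big(p_j\circ C_t(\cdot,S_t)\mid\mid q_j\circ C_t(\cdot,S_t)\big)$ when $A_t=j$, in the notation introduced just before the lemma. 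Bounding the latter by $\sup_{C\in\C} D(p_j\circ C\mid\mid q_j\circ C)$ and summing $\EE{P^T}{\indic{A_t=j}}$ over $t\in[T]$ to produce $\EE{P^T}{N_j(T)}$ then gives exactly the claimed inequality.

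The step that needs the most care is the reward term in the second sum: one must justify that conditioning on the history genuinely \emph{freezes} the encoder into a single deterministic map $C_t(\cdot,S_t)\colon\R\to\R$ to which $\sup_{C\in\C}$ legitimately applies — in particular that, used with arm $j$'s reward distribution $p_j$, this map still obeys the power constraint \eqref{eq:Power_Constraint}, so that it is an admissible competitor in the supremum. Two further technical points deserve a line each: the chain rule should be invoked in its measure-theoretic form since $Y_t$ is continuous (the absolute continuity $P^T\ll Q^T$ is immediate because the Gaussian channel has full support), and, as $\Pi_T$ contains randomized algorithms, the conditioning in the first display should be enlarged to include the policy's internal randomness, which is shared across the two instances and hence leaves the vanishing of the action terms untouched.
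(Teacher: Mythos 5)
Your proposal is correct and follows essentially the same route as the paper: both are the divergence--decomposition argument of \cite[Lemma 15.1]{lattimore2020bandit}, unrolling $D(P^T\mid\mid Q^T)$ over rounds, cancelling the action terms because the policy's conditional law of $A_t$ given $H_{t-1}$ is instance-independent, identifying the surviving observation terms with $D(p_{A_t}\circ C_{H_{t-1}}\mid\mid q_{A_t}\circ C_{H_{t-1}})$, noting these vanish unless $A_t=j$, and bounding by the supremum over encoders to collect $\EE{P^T}{N_j(T)}$. The paper phrases the decomposition as a telescoping of expected log-ratios rather than invoking the chain rule for conditional divergences, but this is only a cosmetic difference; your remark about the frozen encoder being an admissible member of $\C$ is a subtlety the paper does not dwell on either.
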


We defer the proof the Lemma to Appendix \ref{C} and proceed with proving the lower bound. Combining Lemma \ref{e:KL_ineq} with \eqref{e:npulls}, we have
\eq{
D(P^T\mid\mid Q^T) 
&\leq \frac{T}{K-1} \sup_{C \in \C} D(p_j \circ C \mid\mid q_j \circ C ).
}

We will now use the following fact.
\begin{fact}\label{fact1}
Recall that $d_{\chi^2}(p, q) \coloneqq \int \frac{(p(x)-q(x))^2}{q(x)} \mathrm{d}x$ denotes the $\chi^2$ distance between probability measures $p$ and $q$ when they are absolutely continuous with respect to the Lebesgue measure $\mathrm{d}x$. Then, it holds that \[D(p \mid \mid q) \leq d_{\chi^2}(p, q).\]
\end{fact}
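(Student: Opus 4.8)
The plan is to derive the bound from the single elementary inequality $\ln u \le u-1$ (valid for every $u>0$); everything else is bookkeeping. First I would dispose of the degenerate case: if $p$ is not absolutely continuous with respect to $q$, then $q$ vanishes on a set of strictly positive $p$-measure, in which case $d_{\chi^2}(p,q)=+\infty$ under the usual convention $a/0=+\infty$ for $a>0$, and $D(p\mid\mid q)=+\infty$ as well, so the inequality is trivial. Hence I may assume $q(x)>0$ whenever $p(x)>0$, and all the integrals below are over $\{x:p(x)>0\}$.

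On that set, substituting $u=p(x)/q(x)$ into $\ln u\le u-1$ and multiplying through by $p(x)\ge 0$ gives the pointwise bound
\[ p(x)\ln\frac{p(x)}{q(x)}\;\le\;p(x)\Bigl(\frac{p(x)}{q(x)}-1\Bigr)\;=\;\frac{p(x)^2}{q(x)}-p(x). \]
Integrating in $x$ and using $\int p(x)\,\mathrm{d}x=1$, this yields
\[ D(p\mid\mid q)=\int p(x)\ln\frac{p(x)}{q(x)}\,\mathrm{d}x\;\le\;\int\frac{p(x)^2}{q(x)}\,\mathrm{d}x-1. \]

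To finish I would simply expand the $\chi^2$ distance: $\int\frac{(p-q)^2}{q}\,\mathrm{d}x=\int\frac{p^2}{q}\,\mathrm{d}x-2\int p\,\mathrm{d}x+\int q\,\mathrm{d}x=\int\frac{p^2}{q}\,\mathrm{d}x-1$, since $p$ and $q$ are probability densities. Thus the right-hand side above is precisely $d_{\chi^2}(p,q)$, and the claim follows. I do not expect any genuine obstacle here: this is a textbook comparison of $f$-divergences (see e.g.\ \cite{dragomir2000upper,csiszar2004information}), and the only steps needing a word of care are the measure-theoretic handling of the zeros of $q$ and the observation that $\int p=\int q=1$, which makes the linear terms cancel exactly. (If one prefers, the same conclusion follows from Jensen's inequality applied to $\ln$, which gives the slightly sharper $D(p\mid\mid q)\le \ln\bigl(1+d_{\chi^2}(p,q)\bigr)$, followed by $\ln(1+x)\le x$.)
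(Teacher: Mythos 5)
Your proof is correct and is essentially the paper's own argument: both rest on $\ln u \le u-1$ applied to $u = p(x)/q(x)$, followed by the observation that $\int p = \int q = 1$ makes the resulting expression equal to $d_{\chi^2}(p,q)$ (the paper phrases this by subtracting $\int(p-q)\,\mathrm{d}x = 0$ rather than expanding the square, but the algebra is identical). Your brief handling of the case where $p$ is not absolutely continuous with respect to $q$ is a small, welcome addition the paper omits.
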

Fact \ref{fact1} is well known (see, for instance \cite{csiszar2004information}]), but we provide a short proof in Appendix \ref{C} for completeness.

Using this fact, $D(p_j \circ C \mid\mid q_j \circ C )$ can be bounded as follows:
\begin{align*}
\mathrm{D}(p_i \circ C \mid\mid q_i \circ C) &\leq 
d_{\chi^2}(p_i \circ C , q_i \circ C)\\
&= \int \frac{(p_i \circ C -q_i \circ C)^2}{q_i \circ C} \, \mathrm{d}y\\
&=  \Delta^2 \int \frac{(C(y\mid 1) -C(y \mid -1))^2}{ \frac{ C(y \mid 1)}{2}+ \frac{C(y\mid-1)}{2} +\Delta (C(y \mid 1) -C(y\mid-1) )} \, \mathrm{d}y
\end{align*}Now note that for $\Delta \leq 1/4,$ we have 
$\frac{C(y \mid 1)}{2}+ \frac{C(y\mid-1)}{2} +\Delta (C(y \mid 1) -C(y\mid-1) ) \geq \frac{C(y \mid 1)}{4}+ \frac{C(y\mid-1)}{4}.$ Using this fact, we get 
\eq{
 \mathrm{D}(p_i \circ C \mid\mid q_i \circ C)&\leq 2\Delta^2 \int \frac{(C(y\mid 1) -C(y \mid -1))^2}{ \frac{ C(y \mid 1)}{2}+ \frac{C(y\mid-1)}{2}} \, \mathrm{d}y\\
&= 2\Delta^2 \left ( \frac{1}{2}d_{\chi^2}(C( \cdot \mid 1), C( \cdot \mid 1)/2 + C( \cdot \mid -1)/2) \right. \\ &\hspace{3cm} \left.+\frac{1}{2} d_{\chi^2}(C( \cdot \mid -1), C( \cdot \mid 1)/2 + C( \cdot \mid -1)/2)\right)\\
&\leq 4\Delta^2 \left ( \frac{1}{2}\mathrm{D}(C( \cdot \mid 1) \mid \mid C( \cdot \mid 1)/2 + C( \cdot \mid -1)/2) \right. \\ &\hspace{3cm} \left.+\frac{1}{2} \mathrm{D}(C( \cdot \mid -1) \mid \mid C( \cdot \mid 1)/2 + C( \cdot \mid -1)/2)\right),
}
where the equality is easily verified by substituting the definition $d_{\chi^2}$ and the  last inequality uses the following fact. 

To proceed further, we will use the following fact.
\begin{fact}\label{fact2}
If $\frac{\mathrm{d}P}{\mathrm{d}Q} (x) \leq c \quad\forall x\in \R$, then
\[d_{\chi^2}(P, Q) \leq c D(P \mid \mid Q).\]
\end{fact}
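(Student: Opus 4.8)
The plan is to reduce Fact~\ref{fact2} to a one-variable inequality in the likelihood ratio $\tfrac{\mathrm{d}P}{\mathrm{d}Q}$ and then settle that inequality by a convexity comparison of the two divergence integrands around their common minimizer $t=1$.

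First, set $r \coloneqq \frac{\mathrm{d}P}{\mathrm{d}Q}$ (well defined, since the hypothesis forces $P \ll Q$), so that $0 \le r \le c$ $Q$-almost surely and $\int r \,\mathrm{d}Q = 1$. Both sides of the claimed inequality are $Q$-integrals of a function of $r$:
\[
  d_{\chi^2}(P,Q) = \int (r-1)^2\,\mathrm{d}Q ,
  \qquad
  D(P\mid\mid Q) = \int r\ln r\,\mathrm{d}Q = \int (r\ln r - r + 1)\,\mathrm{d}Q ,
\]
where the last equality subtracts the affine term $r-1$ for free because $\int (r-1)\,\mathrm{d}Q = 0$. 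Hence it suffices to prove a pointwise bound of the form $(t-1)^2 \le \lambda\,(t\ln t - t + 1)$ valid for all $t$ in the range $[0,c]$ of $r$, and then integrate against $Q$.

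Second, the scalar inequality: put $\psi(t) \coloneqq (t-1)^2$ and $\phi(t) \coloneqq t\ln t - t + 1$. Both are nonnegative, vanish at $t=1$ together with their first derivatives, and satisfy $\psi''(t)=2$ and $\phi''(t)=1/t$. For $\lambda>0$ let $g_\lambda \coloneqq \lambda\phi - \psi$; then $g_\lambda(1)=g_\lambda'(1)=0$ and $g_\lambda''(t) = \tfrac{\lambda}{t} - 2 \ge 0$ precisely when $t \le \lambda/2$. Choosing $\lambda = 2c$ makes $g_{2c}$ convex on all of $(0,c]$ with a stationary point at the interior point $t=1$, so $g_{2c}\ge 0$ there; integrating $\psi \le 2c\,\phi$ against $Q$ gives $d_{\chi^2}(P,Q) \le 2c\,D(P\mid\mid Q)$.

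The main (really the only) obstacle is the range of validity of this second-derivative comparison, which is also the source of the gap with the stated constant $c$: for $\lambda = c$ one has $g_c'' \ge 0$ only on $[0,c/2]$, so that route yields $d_{\chi^2}\le c\,D$ only under the slightly stronger hypothesis $\tfrac{\mathrm{d}P}{\mathrm{d}Q}\le c/2$. Either version is enough for the paper: in the proof of Theorem~\ref{t:lb}, Fact~\ref{fact2} is invoked with $P = C(\cdot\mid 1)$ (or $C(\cdot\mid -1)$) and $Q = \tfrac12 C(\cdot\mid 1) + \tfrac12 C(\cdot\mid -1)$, for which $\tfrac{\mathrm{d}P}{\mathrm{d}Q}\le 2$ holds automatically, and only absolute constants in the final lower bound are affected.
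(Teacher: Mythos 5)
Your argument is rigorous, but note that what it actually establishes is $d_{\chi^2}(P,Q)\le 2c\,D(P\mid\mid Q)$, not the constant $c$ claimed in Fact~\ref{fact2} --- and this is not a defect of your method. The fact as printed is false: take $P=(0.7,0.3)$ and $Q=(0.5,0.5)$ on two points, so that $\frac{\mathrm{d}P}{\mathrm{d}Q}\le c=1.4$; then $d_{\chi^2}(P,Q)=0.16$ while $c\,D(P\mid\mid Q)=1.4\,(0.7\ln 1.4+0.3\ln 0.6)\approx 0.115$. Correspondingly, the paper's own proof breaks at its final step: with $r=p/q$, the pointwise bound $p\,r\ln r\le c\,p\ln r$ holds only where $r\ge 1$; on $\{r<1\}$ one has $\ln r<0$ and the inequality reverses, so $r\le c$ does not justify $\int p\,r\ln r\,\mathrm{d}x\le c\int p\ln r\,\mathrm{d}x$. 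Your route is genuinely different --- the paper uses $\frac{x-1}{x}\le\ln x$ to get $d_{\chi^2}=\int p(r-1)\,\mathrm{d}x\le\int p\,r\ln r\,\mathrm{d}x$ and then tries to extract the factor $c$, whereas you write both divergences as $Q$-integrals of $\psi(r)=(r-1)^2$ and $\phi(r)=r\ln r-r+1$ and compare the integrands via second derivatives around their common double zero at $r=1$. Since $\int r\,\mathrm{d}Q=1$ forces $c\ge 1$, the convexity of $2c\,\phi-\psi$ on $(0,c]$ is available and your bound is correct. (The only cosmetic point: at $c=1$ the stationary point $t=1$ is an endpoint rather than interior, but then $r\equiv 1$ and the claim is trivial; for $c>1$ your argument reads exactly as stated.)

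You are also right that the constant is immaterial downstream. Fact~\ref{fact2} is invoked in the proof of Theorem~\ref{t:lb} with $Q=\tfrac12 C(\cdot\mid 1)+\tfrac12 C(\cdot\mid -1)$, for which $\frac{\mathrm{d}P}{\mathrm{d}Q}\le 2$ holds automatically, so the chain of inequalities leading to \eqref{eq:Imp_LowerBound_Eq} goes through with $8\Delta^2$ in place of $4\Delta^2$; only the universal constants $c_1,c_2$ in the theorem are affected. In short: your proof is correct for the (sufficient) weaker statement with constant $2c$, and the paper's stated constant $c$ should itself be repaired to $2c$ along the lines of your argument.
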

Fact 2 is also known \cite{dragomir2000upper}, but we provide a proof in Appendix \ref{C} for completeness.

Denote by $V$ a hypothetical reward  that takes values $-1$ and $1$ with equal probability, and by $Y$ the output of the AWGN channel  when $V$ is further  coded to $C(V)$ to satisfy \eqref{eq:Power_Constraint} and transmitted over the AWGN channel. Then, \[\frac{1}{2}\mathrm{D}(C( \cdot \mid 1) \mid \mid C( \cdot \mid 1)/2 + C( \cdot \mid -1)/2) +\frac{1}{2} \mathrm{D}(C( \cdot \mid -1) \mid \mid C( \cdot \mid 1)/2 + C( \cdot \mid -1)/2) \] is mutual information between $V$ and $Y$, denoted by $I(V; Y).$  To see this, note that $I(W_1; W_2) \coloneqq D( P_{W_1, W_2} \mid \mid P_{W_1} \times P_{W_2}) = \EE{P_{W_1}}{ D( P_{ W_2|W_1} \mid \mid  P_{W_2}) }.$


We have therefore shown that 
\begin{align}\label{eq:Imp_LowerBound_Eq}
    D(P^T\mid\mid Q^T) \leq  \frac{4\Delta^2T}{K-1}\sup_{C \in \C}I(V; Y).
\end{align}

Note that $I(V; Y) \leq H(V) \leq \ln 2.$ Since $V-C(V)-Y$ forms a Markov chain in this order, by the data processing inequality \cite[Chapter 2]{CovTho06}, we have that 
\eq{
I(V; Y) \leq I (C(V); Y).
}
Then, using the fact that $I (C(V); Y)$ can be further upper bounded by the capacity of an AWGN channel~\cite[Chapter 9]{CovTho06}, we have
\eq{
I (C(V); Y) \leq \frac{1}{2}\ln(1+\SNR)=\frac{1}{2} \log (1+\SNR) \cdot \ln 2.
}
Thus, we have shown that 
\eq{ D(P^T\mid\mid Q^T)  \leq   \frac{4 \ln 2 \cdot \Delta^2T }{K-1}\left(\frac{1}{2}\log (1+\SNR) \wedge 1 \right).
 }
 
 Substituting this into \eqref{eq:REGRETKLT}, we get
 \eq{
  R_T(\nu, \pi, C) + R_T(\nu^{\prime}, \pi, C) \geq \frac{T\Delta}{2}\left( 1 - \sqrt{\frac{4 \ln 2 \cdot \Delta^2 T}{K-1} \left(\frac{1}{2}\log(1+\SNR) \wedge 1\right)}\right).
 }
Setting $\Delta^2=\frac{K-1}{16 \ln 2  \cdot T \left( \frac{1}{2}\log(1+\SNR) \wedge 1 \right)}$ completes the proof. (Note that we can only set such a $\Delta$ and ensure that $\Delta \leq \frac{1}{4}$, if $T \geq \frac{K-1}{\ln 2 \left( \frac{1}{2}\log(1+\SNR) \wedge 1 \right)},$  which is the reason we need this condition in the statement of our lower bound.)

We will now  show that 
 for  $T \geq \frac{K-1}{\ln 2 \left( \frac{1}{2}\log(1+\SNR) \wedge 1 \right)} \geq K$  and for a universal constant $c,$ we have
 \[R_T^\ast(\SNR) \geq c KB.\] 
 In fact, we show the stronger bound
 \[ R_K^\ast(\SNR) \geq c KB.  \]
Then, using the fact that $a \vee b \geq \frac{a+b}{2}$ completes the proof.

We now choose our instance $\nu$ where rewards for each arm pull are Gaussian with variance $1$ and the following means: $ \mu_1=1, $ $\mu_i=0 $ for all $i \in \lbrace 2, \ldots,  K\rbrace.$ Once again, let $j$ be the arm pulled least amount of times. Then define the alternate instance $\nu^{\prime}$ where rewards for each arm pull are Gaussian with variance $1$ and the following means: $\mu_1=B/2, \mu_j=B-1, \mu_i=0 $ for all $i \in \lbrace 2, \ldots,  K\rbrace/{j}.$ All the arm rewards have Gaussian distribution with the variance 1 and means stated above. Proceeding as earlier, we have
 \begin{align} \nonumber
     R_T(\nu, \pi, C) + R_T(\nu^{\prime}, \pi, C)
& \geq   \frac{T\Delta}{2} \left( 1- \mathrm{d_{TV}}(P^T, Q^T)\right)
\end{align}

Using the fact that $T \geq \frac{K-1}{\ln 2 \left( \frac{1}{2}\log(1+\SNR) \wedge 1 \right)} \geq c (K-1)$ and the fact that for our chosen distributions $1- \mathrm{d_{TV}}(P^T, Q^T)$ is a constant completes the proof.

\begin{rem}{
({\em Communication-constrained setting of \cite{hanna2022solving}})}
We now remark on possibility of recovering the lower bound in \cite{hanna2022solving} with our proof technique, albeit with weaker constants. Note that
\eqref{eq:REGRETKLT} also holds for the setting considered in \cite{hanna2022solving}, where the output $Y$ is restricted to $r$ bits. Therefore, $I(V; Y) \leq H(Y) \wedge H(V) =(\ln 2)(r \wedge 1)$. Proceeding, in the rest of the proof, {\em mutatis mutandis}, we get that the regret in this setting is bounded by 
$\Omega(\sqrt{KT}\cdot \frac{1}{r \wedge 1}).$ Such a lower bound suggests that even when using code of length 1 to encode our rewards, it might be possible to achieve the regret of the centralized setting. This is indeed shown to be possible by the scheme in \cite{hanna2022solving}. 
\end{rem}

 \section{Proofs of Intermediary Results in the Proof of Theorem \ref{t:lb}}\label{C}
 
 \subsection*{Proof of Lemma \ref{e:KL_ineq}}
 We prove the inequality by similar arguments as in  \cite[Lemma 15.1]{lattimore2020bandit}.
 
Recall that $H_t=\lbrace A_1, Y_1, \ldots, A_{t}, Y_{t}\rbrace$ denotes the history of the game till time $t$. Also recall that the hint $S_t$ is a function of history of the game till time $t-1$. Therefore the encoding scheme  at time $t$ depends on $H_{t-1}$, so we write it as $C_{H_{t-1}}.$ We additionally denote by $p_{A_t}  $ and $q_{A_t}$ the distribution of reward $X_{t, A_t}$ under instances $\nu$ and $\nu^{\prime},$ respectively.  Therefore, $p_{A_t} \circ C_{H_{t-1}}$ and $q_{A_t} \circ C_{H_{t-1}} $ denote distribution of $Y_t$ for an arm pull $A_t$ and history $H_{t-1}$ under instances $\nu$ and $\nu^{\prime},$ respectively. 

Using the definition of KL divergence, we have

\begin{align}\label{e:KL_mid}
\nonumber
D(P^T\mid\mid Q^T) &=  \mathbb{E}_{P^T} \left[\log \left(\frac{P^T( A_1, Y_1, \ldots, A_T, Y_T)}{ Q^T(A_1, Y_1, \ldots,  A_T, Y_T)}\right) \right]\\  
&= \sum_{t=1}^{T} \mathbb{E}_{P^T} \left[ \log \frac{ P^T( A_t, Y_t \mid  H_{t-1}) }{ Q^T(A_t, Y_t \mid H_{t-1})}\right].
\end{align}
By the definition of the bandit algorithm $\pi,$  the distribution over all actions to be  played remains the same for different problem instances as long as the history is the same. Therefore,
\eq{
\frac{ P^T( A_t, Y_t \mid  H_{t-1}) }{ Q^T(A_t, Y_t \mid H_{t-1})} &= \frac{ P^T( A_t\mid  H_{t-1}) }{ Q^T(A_t \mid H_{t-1})} \cdot \frac{ P^T( Y_t \mid  A_t, H_{t-1}) }{ Q^T(Y_t \mid A_t, H_{t-1}) }\\
&=\frac{ P^T( Y_t \mid  A_t, H_{t-1}) }{ Q^T(Y_t \mid A_t, H_{t-1}) }.
}

Substituting the above identity in \eqref{e:KL_mid}, we have
\begin{align*}
D(P^T\mid\mid Q^T) 
&= \sum_{t=1}^{T} \mathbb{E}_{P^T} \left[ \log \frac{ P^T( Y_t \mid A_t,  H_{t-1}) }{ Q^T(Y_t \mid A_t, H_{t-1})} \right]\\
&= \sum_{t=1}^{T} \mathbb{E}_{P^T} \left[ \mathbb{E}_{P^T} \left[ \left. \log \frac{ P^T( Y_t \mid A_t, H_{t-1}) }{ Q^T(Y_t \mid  A_t,  H_{t-1})}  \right| A_t, H_{t-1} \right]\right]\\
&= \sum_{t=1}^{T} \mathbb{E}_{P^T} \left[  D(p_{A_t}\circ  C_{H_{t-1}} \mid\mid q_{A_t}\circ  C_{H_{t-1}})\right],
\end{align*}
where the second last inequality uses the law of iterated expectations and the final one uses the definition of KL divergence.

Since $p_{A_t}$ and $q_{A_t}$ differ only when arm $A_t=j,$ we have
\begin{align*}
D(P^T\mid\mid Q^T) 
&=   \sum_{t=1}^{T} \mathbb{E}_{P^T} \left[ \mathbf{1}_{A_t=j}D( p_j \circ C_{H_{t-1}}\mid \mid  q_j \circ C_{H_{t-1}})\right]\\
&\leq  \sum_{t=1}^{T} \mathbb{E}_{P^T} \left[ \mathbf{1}_{A_t=j} \sup_{C \in \C}D( p_j \circ C\mid \mid  q_j \circ C)\right]\\
&\leq  \mathbb{E}_{P^T}{[N_j(T)]} \cdot \sup_{C \in \C} D(p_j \circ C \mid\mid q_j \circ C )\\
&\leq \frac{T}{K-1} \sup_{C \in \C} D(p_j \circ C \mid\mid q_j \circ C ).
\end{align*}

\qed

\subsection{Proof of Fact \ref{fact1}}
Using the fact that $\ln x \leq x -1$ and $\int\big(p(x)-q(x)\big)\mathrm{d}x=0$ , we have
\eq{D(p \mid \mid q) &=\int p(x) \ln \frac{p(x)}{q(x)} \mathrm{d}x - \int\left(p(x)-q(x)\right)\mathrm{d}x\\
&\leq \int p(x) \left(\frac{p(x)}{q(x)}-1\right) \mathrm{d}x-\int\left(p(x)-q(x)\right)\mathrm{d}x\\
&= \int  \frac{(p(x)-q(x))^2}{q(x)}\mathrm{d}x\\
&= d_{\chi^2}(p, q). 
}

\qed

\subsection{Proof of Fact \ref{fact2}}
We have
\eq{
d_{\chi^2}(p, q)&= \int \frac{(p(x) -q(x))^2}{q(x)}  \mathrm{d}x \\
&= \int p(x) \left(\frac{p(x)}{q(x)}-1\right) \mathrm{d}x-\int\left(p(x)-q(x)\right)\mathrm{d}x\\
&\leq   \int p(x) \cdot \frac{p(x)}{q(x)} \cdot \ln \frac{p(x)}{q(x)} \mathrm{d}x\\
&\leq  c \int p(x)  \ln \frac{p(x)}{q(x)} \mathrm{d}x
= c D(p \mid \mid q),
}
where the fist inequality follows by using the fact that $\frac{x-1}{x}\leq \ln x \quad \forall x \geq 0$ and $\int\left(p(x)-q(x)\right)\mathrm{d}x=0$ and the second inequality follows from the fact that $\frac{\mathrm{d}P}{\mathrm{d}Q}(x) \leq c  \quad\forall x\in \R.$

\qed


\end{document}